\setlist{nolistsep}
\newtheorem{theorem}{Theorem}
\newtheorem{proposition}{Proposition}
\newtheorem{lemma}{Lemma}
\theoremstyle{remark}
\newtheorem{remark}{Remark}
\theoremstyle{definition}
\def\N{\mathbb{N}}
\def\Z{\mathbb{Z}}
\def \E {\mathbb{E}}
\def \P {\mathbb{P}}
\def \-> {\rightarrow}
\journal{Operations Research Letters}
\begin{document}

\begin{frontmatter}

\title{Optimal data pooling for shared learning in maintenance operations}

\author{Collin Drent\corref{cor1}}

\cortext[cor1]{Corresponding author}
\ead{c.drent@tue.nl}

\author{Melvin Drent}
\ead{m.drent@tue.nl}

\author{Geert-Jan van Houtum}
\ead{g.j.v.houtum@tue.nl}

\address{Eindhoven University of Technology, School of Industrial Engineering, Eindhoven, The Netherlands}
\begin{abstract}
We study optimal data pooling for shared learning in two common maintenance operations: condition-based maintenance and spare parts management. We consider a set of systems subject to Poisson input -- the degradation or demand process -- that are coupled through an a-priori unknown rate. Decision problems involving these systems are high-dimensional Markov decision processes (MDPs) and hence notoriously difficult to solve. We present a decomposition result that reduces such an MDP to two-dimensional MDPs, enabling structural analyses and computations. Leveraging this decomposition, we (i) demonstrate that pooling data can lead to significant cost reductions compared to not pooling, and (ii) show that the optimal policy for the condition-based maintenance problem is a control limit policy, while for the spare parts management problem, it is an order-up-to level policy, both dependent on the pooled data.
\end{abstract}
\begin{keyword}
condition-based maintenance \sep data pooling \sep Bayesian learning \sep spare parts \sep optimal policy

\end{keyword}

\end{frontmatter}


\section{Introduction}
Unplanned downtime of advanced technical systems such as aircraft, lithography systems, or rolling stock, is extremely costly for companies that rely on these systems in their primary processes. As such, these companies typically have agreements with maintenance service providers -- external or internal -- to ensure sufficiently high availability of their systems.
Recent advancements in information technology along with continuous reductions in costs of sensors have led to ample opportunities for service providers to improve their maintenance operations \citep{olsen2020industry}. Indeed, modern systems are now increasingly equipped with sensors that relay degradation data of critical components in real-time to maintenance decision-makers. This data is useful for inference of system degradation behavior; however, the amount of data that each such system generates  to predict failures of a particular component is scarce, especially for newly introduced systems. 

Maintenance service providers typically maintain several systems of the same type (e.g. similar systems for the same customer at different locations, or similar systems for different customers).
At the beginning of the life-cycle of a newly introduced system, the maintenance service provider thus faces a setting where (i) multiple systems of the same type generate a steady stream of degradation data, but at the same time, (ii) each such system alone has not yet generated sufficient amounts of data. 
A prime example of this can be found in the semiconductor industry, where the original equipment manufacturer itself is often also responsible for maintaining its lithography systems after they are sold. Upon the introduction of a new generation of lithography system in the field, many critical components in this system are also used for the first time, and hence no historical degradation data is available \citep[][]{dursun2022data}.  

For the settings described above, it is evident that pooling degradation data from multiple systems can lead to cost reductions in maintenance operations. However, it remains unclear how we can precisely quantify these cost reductions, especially when we are interested in optimal decisions and the state space of the corresponding Markov decision process (MDP) thus becomes large. In this paper we address this question. More specifically, we consider a maintenance service provider that is responsible for maintaining multiple systems of the same type at different locations or customers. These systems are equipped with sensory technology that relay degradation data in real-time to the service provider. As these systems are used for the first time, there is only limited information available per system at the start of their life-cycle.

We consider a single component that is present in the configuration of all systems. These components deteriorate according to a Poisson process with the same but unknown rate. As the components are critical, the systems fail whenever the component's degradation reaches a certain failure threshold. Such failures can be prevented by performing preventive maintenance, which is cheaper than replacement upon failure, which generally leads to costly unplanned downtime. The maintenance service provider must periodically decide -- based on the state of all systems and accumulated data -- for each system to perform preventive maintenance or not, thereby trading off costly premature interventions with costly tardy replacements. Systems are homogeneous with respect to the unknown deterioration rate, but are otherwise heterogeneous (i.e. costs and failure thresholds). We endow the unknown rate with a prior distribution and propose a Bayesian procedure that is able to pool all data and learn this rate jointly on-the-fly as data becomes available. We model this decision problem as a Bayesian MDP for which the optimal policy -- in theory -- can be computed through standard methods. However, because both the action and state space grow exponentially in the number of systems, this MDP will quickly suffer from the curse of dimensionality, making it impossible to assess the value of optimal data pooling. As a remedy, we establish a novel decomposition result that reduces this high-dimensional MDP to multiple two-dimensional MDPs that permit structural analyses and computations. 

When components have constant failure rates, maintenance service providers typically replace these components with new spares only correctively upon failure, i.e. they apply repair-by-replacement. The underlying spare parts inventory system responsible for supplying these spares then largely determines the availability of the technical systems. As an extension, we will show that our decomposition result also applies to such a spare parts inventory system consisting of multiple local warehouses that keep spares for the same critical component whose failure rate is unknown.

Sequential Bayesian learning based on sensory data stemming from systems has been used extensively in the maintenance literature to study optimal maintenance decision-making when relevant parameters are a-priori unknown \cite[e.g.][]{elwany2011structured,drent2020censored,drent2022real}, but only exclusively for single-component systems in isolation (we refer to \cite{de2020review} for a comprehensive overview of the area). This makes sense when the unknown parameter is unique to the specific system in use. However, as we argued above, in practice parameters may be the same for multiple systems of the same type. When a maintenance service provider maintains several systems of the same type, as we consider in this paper, it is natural to pool data stemming from all these systems to jointly learn the common parameter. 

The benefit of pooling has been extensively studied in many application domains, yet almost exclusively related to pooling of physical resources. Notable examples include inventory pooling in inventory networks \cite[see, e.g.,][]{eppen1979note} and pooling of server capacity in queuing networks \citep[see, e.g.,][]{mandelbaum1998pooling}. Recently, researchers have started to investigate the benefits of pooling data, mainly driven by the benefits of pooling physical resources \citep[see, e.g.,][]{bastani2021meta,gupta2021data}. 
Within the maintenance literature, only two works exist on data pooling for learning parameters \citep{deprez2022data,dursun2022data}. \cite{deprez2022data} investigates the benefits of combining data from a set of heterogeneous machines in the context of time-based preventive maintenance. The authors propose a method in which limited data stemming from multiple systems can be aggregated such that it can be utilized for selecting a periodic interval at which preventive maintenance is performed for each individual system. \cite{dursun2022data} exploits the use of data pooling to determine whether a set of systems is stemming from a so-called weak or strong population, where the former has lifetimes that are stochastically smaller than the latter. Unlike \cite{deprez2022data}, who proposes a static estimation procedure based on historical pooled data, \cite{dursun2022data}  builds a partially observable MDP that sequentially learns as more data becomes available. They numerically show -- only for small instances due to the curse of dimensionality -- that data pooling can lead to savings of up to 14\% compared to not pooling data. We also learn from pooled data in a dynamic, sequential way, but circumvent the resulting curse of dimensionality by leveraging our new decomposition result. Both \cite{deprez2022data} and \cite{dursun2022data} pool data to learn a time-to-failure model in a time-based maintenance setting, while we focus on learning a degradation model in a condition-based maintenance setting. 

The main contributions of this paper are as follows: 
\begin{enumerate}
\item We formulate the problem of optimally maintaining $N$ systems with a common, unknown deterioration rate over a finite lifespan $T$ as a finite horizon Bayesian MDP in which data is pooled for shared learning. This formulation suffers from the well-known curse of dimensionality: The cardinality of both the action and state space grow exponentially in $N$. As a remedy, we provide a new decomposition result that establishes the equivalence between the original MDP and $N$ two-state MDPs with a binary action space, each focused on an individual system.  
\item Using the decomposition, we are able to show that the structure of the optimal policy of each individual system has a control limit structure, where the control limit depends on the pooled data obtained from all systems. Perhaps counterintuitively, we show that this optimal control limit is not monotone in general. Although the control limit typically decreases first, it always increases and converges to the failure level when the pooled data grows very large, implying that preventive maintenance is never optimal in that asymptotic regime. 
\item We investigate numerically the savings that can be attained by pooling data to learn the a-priori unknown deterioration rate, while optimally maintaining the systems. We find that the savings can be significant, even for small values of $N$, and that the exact magnitude of these savings largely depends on the magnitude of the uncertainty in the parameter (measured by the variation of the initial prior distribution). When there is high uncertainty, huge savings of close to $57\%$ can be realized on average, while these savings become almost negligible when the uncertainty is low.
\item We finally demonstrate the general applicability of our decomposition result by applying it to a spare parts inventory system consisting of multiple local warehouses where a common, but unknown failure rate needs to be learned. For this setting, we establish the optimality of monotone order-up-to policies, where the optimal order-up-to levels are non-decreasing in the data obtained from all local warehouses.
\end{enumerate} 

The remainder is organized as follows. We provide a model description in Section \ref{cbmPoolFormulation}. In Section \ref{poolingMDP}, we formulate the problem as a Bayesian MDP and we show that it can be decomposed into $N$ alternative MDPs. We present some structural properties of both the expected cost and the optimal policy of the alternative MDP in Section \ref{structAlternative}. In Section \ref{PoolingNum}, we report on an extensive numerical study that highlights the benefit of pooling data. In Section \ref{Pooling:spare}, we apply our decomposition result to a set of spare parts inventory systems. Finally, Section \ref{Pooling:conclusion} provides concluding remarks.

\section{Model description}\label{cbmPoolFormulation}
We consider a set of $N\geq 1$ systems subject to damage accumulation due to random shocks that arrive over time. Random shock degradation is a common assumption in the maintenance literature \cite[see, e.g.,][]{kurt2010monotone,li2022after} that has been validated by practice-based research \cite{drent2022real}. We remark that although data pooling has only value when $N>1$, the analysis in this paper also holds for $N=1$. The set of all systems is denoted by $\mathcal{N}$, i.e. $\mathcal{N} = \{1, \ldots,N\}$. We assume that each system has a critical component such that the system breaks down whenever this component fails. The deterioration processes of these components are modeled as independent Poisson processes with the same rate $\lambda$, denoted with $\{X_i(t), t\geq 0 \}$, with $X_i(0) = 0$, for $i \in \mathcal{N}$. A component of system $i \in \mathcal{N}$ deteriorates until its deterioration level reaches or crosses a finite deterministic failure threshold, denoted with $\xi_i \in \N_+$, where  $\N_+ \triangleq \{1,2,\ldots\}$, after which the component has failed. This failure threshold $\xi_i$ is essentially the maximum physical capacity of a component to withstand the accumulated damage and under which system $i\in \mathcal{N}$ still adequately performs its function. In most practical situations, components of the same type will have the same failure threshold $\xi_i$. However, to allow for the setting in which components have different capacities to withstand deterioration -- which is reasonable when some components are of better quality than others -- we let the failure threshold $\xi_i$ depend on system $i\in \mathcal{N}$. Observe that the same failure threshold for each component is of course a special case of this general set-up.

The deterioration levels are monitored at equally spaced decision epochs, though failure moments can happen at any point in time (i.e. not only at decision epochs). Replacing only at decision epochs is a reasonable assumption given that critical components in these systems typically have mean lifetimes ranging from 1 to 10 years, while maintenance decisions are made much more frequently, often on a daily to weekly basis \citep{oner2013redundancy,lamghari2021new}. For convenience, we rescale time such that the time between two decision epochs equals 1. If at a decision epoch a component of system $i \in \mathcal{N}$ is failed, it needs to be replaced correctively at costs $c^i_u>0$. Such a failure can be prevented by performing a preventive replacement, which costs $c^i_p>0$, with $c^i_p<c^i_u$ for all $i \in \mathcal{N}$. 
Corrective maintenance is more expensive because it includes costs caused by a component failure in addition to the costs related to the replacement (e.g. unplanned downtime costs). Both replacements involve a new component that starts deteriorating again from level 0 according to a Poisson process with rate $\lambda$, that is, $\{X_i(t), t\geq 0 \}$ is reset to $X_i(0) = 0$. We assume that replacement times are negligible. This is a reasonable assumption given the efficiency of replacing old components with new ones, which usually takes only a few minutes to an hour -- significantly shorter than the time between consecutive decision epochs.

The systems have a common, finite lifespan, with length $T\in\N_+$ time units. This lifespan represents the time from their introduction until they are taken out of service, with typical durations ranging from 10 to 30 years \citep{oner2013redundancy}. We let this lifespan consist of $T$ discrete time steps corresponding to the intervals between consecutive decision epochs.  

The maintenance service provider, responsible for maintaining the set of $N$ systems, seeks to minimize the total  expected maintenance costs -- due to both corrective and preventive replacements of components -- over the systems' lifespan. In dealing with this optimization problem, the maintenance service provider faces another layer of uncertainty in addition to the random shock arrivals. That is, the components used for all replacements always have the same rate but this rate is a-priori unknown and needs to be inferred based on the observations of the deterioration processes throughout their lifespan. Since all components have the same rate, the maintenance service provider can pool and utilize all accumulated data together in real-time when inferring this unknown rate.   
To this end, we adopt a Bayesian approach and treat the unknown rate $\lambda$ as a random variable denoted with $\Lambda$. Upon the start of operating all systems, at $t=0$,  $\Lambda$ is modeled by a Gamma distribution with shape parameter $\alpha_0$ and rate parameter $\beta_0$. The subscript notation reflects that this corresponds to $t=0$; we adopt this notation in the remainder of this paper. Thus, at $t=0$, the density function of  $\Lambda$ is equal to
\begin{align*}
f_{\Lambda}(\lambda;\alpha_0,\beta_0) & = \frac{ \lambda^{\alpha_0-1} e^{-\beta_0 \lambda} \beta_0^{\alpha_0}}{\Gamma(\alpha_0)} \quad \text{ for } \lambda > 0, \quad \alpha_0, \beta_0 > 0,
\end{align*} 
where $\Gamma{(\cdot)}$ denotes the Gamma function. Estimation procedures are available in the literature for obtaining the parameters of this initial belief based on expert knowledge or historical data \citep[see, e.g.,][]{aronis2004inventory, drent2022real}. Suppose that at decision epoch $t \in \N_+$, we observed a cumulative amount of $k$ deterioration increments from all installed components. As degradation is modeled by a Poisson process, which is a non-decreasing, integer-valued process, we know that the degradation increments are non-negative and integer-valued as well. Hence, we know that the cumulative sum of all deterioration increments from all installed components, $k$, will always be a non-negative integer. Our choice for the Gamma distribution is not only empirically grounded \citep[e.g.][]{aronis2004inventory,drent2022real}, but also mathematically convenient and therefore quite customary in the literature. Indeed, it is well-known that the Gamma distribution is a conjugate prior for the Poisson distribution, which implies that the new posterior distribution describing our belief of $\Lambda$ is again a Gamma distribution but with updated parameters \citep[see, e.g.,][Chapter 2]{gelman1995bayesian}: 
\begin{align}\label{updatingChap5}
\alpha_t = \alpha_0 + k \quad \mbox{and }\quad \beta_t = \beta_0 +  N\cdot t.
\end{align}
Observe that from the updating scheme in Equation \eqref{updatingChap5}, it is immediately clear that the data stemming from all $N$ systems is pooled for learning the unknown rate $\lambda$ that the systems have in common. In Bayesian terminology, $k$ is the sufficient statistic (which is thus linear in the observations) and $N\cdot t$ is the total amount of observations at decision epoch $t$. At each decision epoch, based on her current belief of $\Lambda$, the maintenance service provider wishes to predict the future evolution of the deterioration of each component so that she can decide upon potential replacements. This prediction is encoded in the posterior predictive distribution. For this Gamma-Poisson model, it is well-known that the posterior predictive distribution is a Negative Binomial distribution \citep[see, e.g.,][]{gelman1995bayesian}. Specifically, given parameters $\alpha_t$ and $\beta_t$, the deterioration increment (i.e. $X_i(t+1) - x_i(t)$ with $x_i(t)$ the current deterioration at decision epoch $t$) of a component at system $i$ at the next decision epoch, denoted with $Z_i$, is Negative Binomially distributed with parameters 
\begin{align}\label{negativebinom}
r = \alpha_t \quad \mbox{and }\quad p = \frac{\beta_t}{\beta_t + 1}, 
\end{align}
where $r$ is the number of successes and $p$ is the success probability, so that $Z_i$ can be interpreted as the number of failures until the $r^{th}$ success. In the remainder we use the notation $Z\sim NB(r,p)$ to denote that $Z$ is a Negative Binomially distributed random variable with parameters $r$ and $p$.  

Equation \eqref{negativebinom} together with the updating scheme in \eqref{updatingChap5} can be used to construct an updated posterior predictive distribution at each decision epoch of the next deterioration increments in real-time based on the observed data.
Since the posterior predictive distributions of the deterioration increments of each system are fully described by only the current decision epoch $t$ and cumulative amount of deterioration increments $k$, it is a Markov process. This allows us to formulate the optimization problem as a finite horizon (with length $T$) MDP equipped with the state variable $k$ for Bayesian inference of the unknown rate. Before doing so, we end this section with an important result that establishes a stochastic ordering property of the posterior predictive distribution $Z$ (for brevity we drop the dependence on $k$, $N$ and $t$) in the cumulative amount of deterioration increments $k$ when everything else is fixed.
\begin{lemma}\label{increasingStochastic} The posterior predictive random variable $Z$ is stochastically increasing in $k$ in the usual stochastic order. 
\end{lemma}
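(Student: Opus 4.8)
The plan is to exploit the Gamma--Poisson mixture structure behind the posterior predictive law. Conditional on the rate, the one-step increment $Z=Z_i$ is $\mathrm{Poisson}$ with that rate; and after observing cumulative increments $k$ at epoch $t$, the posterior of $\Lambda$ is $\mathrm{Gamma}(\alpha_0+k,\beta_0+Nt)$ by \eqref{updatingChap5}. Write $W_k \sim \mathrm{Gamma}(\alpha_0+k,\beta_0+Nt)$, so that $Z\mid W_k=w \sim \mathrm{Poisson}(w)$; note that $k$ enters only through the shape parameter, since the rate $\beta_0+Nt$ --- equivalently the success probability $p$ in \eqref{negativebinom} --- is free of $k$. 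It therefore suffices to establish (i) $W_k \leq_{\text{st}} W_{k'}$ for $k\le k'$, and (ii) that mixing a stochastically increasing family of Poisson laws over stochastically ordered mixing distributions preserves the stochastic order.

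For (i) I would argue via the stronger likelihood ratio order. For $k\le k'$ the ratio of the two Gamma densities equals
\[
\frac{f_{\Lambda}(\lambda;\alpha_0+k',\beta_0+Nt)}{f_{\Lambda}(\lambda;\alpha_0+k,\beta_0+Nt)}
=\frac{\Gamma(\alpha_0+k)}{\Gamma(\alpha_0+k')}\,(\beta_0+Nt)^{k'-k}\,\lambda^{k'-k},
\]
which is nondecreasing in $\lambda>0$ because $k'-k\ge 0$; hence $W_k\leq_{\text{lr}}W_{k'}$, and the likelihood ratio order implies the usual stochastic order $W_k\leq_{\text{st}}W_{k'}$. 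This is the one point where the elementary ``$Z$ is a sum of $r$ i.i.d.\ geometrics'' coupling for $NB(r,p)$ fails, because $r=\alpha_0+k$ need not be an integer; the density-ratio computation works for arbitrary real shapes and is the only genuine subtlety in the argument.

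For (ii), $\lambda\mapsto\mathrm{Poisson}(\lambda)$ is stochastically increasing --- e.g.\ by the coupling $\mathrm{Poisson}(\lambda')\eqd\mathrm{Poisson}(\lambda)+\mathrm{Poisson}(\lambda'-\lambda)$ with independent summands --- so for every nondecreasing $\phi:\N\to\R$ the function $g(w):=\E[\phi(\mathrm{Poisson}(w))]$ is nondecreasing on $[0,\infty)$. Applying $W_k\leq_{\text{st}}W_{k'}$ to the nondecreasing $g$ then gives $\E[\phi(Z)\mid k]=\E[g(W_k)]\le\E[g(W_{k'})]=\E[\phi(Z)\mid k']$, and since $\phi$ was an arbitrary nondecreasing function this is exactly $Z\mid k \leq_{\text{st}} Z\mid k'$, i.e.\ $Z$ is stochastically increasing in $k$. (Alternatively, one can run the whole argument in the likelihood ratio order and invoke its closure under mixtures, but the usual stochastic order is all that Lemma \ref{increasingStochastic} requires.)
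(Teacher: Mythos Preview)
Your argument is correct and complete, but it takes a genuinely different route from the paper. The paper exploits only the convolution closure of the Negative Binomial family in its first parameter at fixed success probability: writing $\bar{k}=k^+-k^-\geq 0$, one has $Z(k^+)\eqd Z(k^-)+Z(\bar{k})$ with $Z(\bar{k})\sim NB(\bar{k},p)$ independent of $Z(k^-)$, and the usual stochastic order follows at once from $Z(\bar{k})\geq 0$. Your concern that a direct coupling breaks down when $r=\alpha_0+k$ is non-integer is misplaced in this context: the identity $NB(r_1,p)*NB(r_2,p)=NB(r_1+r_2,p)$ holds for all real $r_1,r_2>0$ (immediate from the probability generating function $(p/(1-(1-p)s))^{r}$), so no decomposition into geometrics is needed. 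What your mixture argument buys is structural transparency---it makes explicit that the monotonicity comes from the shape parameter of the Gamma posterior---and, as you note, the stronger likelihood-ratio order as a free by-product; what the paper's argument buys is brevity, reducing the whole lemma to a single convolution observation.
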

\begin{proof}
See \ref{proofLemma1}. 
\end{proof}
Lemma \ref{increasingStochastic} implies that if the sum of observed deterioration increments increases, and all else is fixed, then the next random deterioration increments are more likely to take on higher values. This is also intuitive since the mean  increment ($\frac{\alpha_t}{\beta_t}$) increases in $k$, see Equation \eqref{updatingChap5}.  

\section{Markov decision process formulation}\label{poolingMDP}
We will now formulate the problem described in the previous section as an MDP. 
The state space of the MDP is represented by the set $\mathcal{S}\triangleq \N^{N+1}_{0}$ where $\N_{0} \triangleq \N_+ \cup \{0\}$ represents the set of non-negative integers. For a given state $(\bm{x},k)\in\mathcal{S}$, $\bm{x}=(x_1,x_2,\ldots, x_N)$ represents the vector of all deterioration levels, and $k$ denotes the sum of all deterioration increments. Recall that as we are dealing with Poisson degradation, both the deterioration levels and the sum of deterioration increments are non-negative integer-valued. For a given state $(\bm{x},k)\in\mathcal{S}$, let $\mathcal{A}(\bm{x})$ denote the action space. For any action $\bm{a} =(a_1, a_2, \ldots, a_N) \in \mathcal{A}(\bm{x})$, $a_i$ represents the action per system, with $a_i \in \{0,1\}$ when $ x_i < \xi_i$ and  $a_i = 1$ otherwise. 
Here, $a_i=0$ corresponds to taking no action and $a_i=1$ corresponds to performing maintenance on the component of system $i$, respectively. This implies that if the critical component of system $i$ is failed (i.e. $x_i \geq \xi_i$), then the maintenance service provider must (correctively) replace it. For all components that have not failed, the maintenance service provider can choose to either preventively replace it, or do nothing and continue to the next decision epoch.  

Given the state $(\bm{x},k)\in\mathcal{S}$ and an action $\bm{a} = (a_1, a_2, \ldots, a_N) \in \mathcal{A}(\bm{x})$, the maintenance service provider incurs a direct cost, denoted by $C(\bm{x},\bm{a})$, equal to 
\begin{align}\label{directcost}
C(\bm{x},\bm{a}) \triangleq \sum_{i\in\mathcal{N}} \left( a_i \big(1-\mathbb{I}_i(\bm{x})\big)c^i_p + \mathbb{I}_i(\bm{x})c^i_u \right),
\end{align}
where $\mathbb{I}_i(\bm{x})$ is an indicator function that indicates whether the component of system $i$ has failed in the deterioration vector $\bm{x}$; that is, $\mathbb{I}_i(\bm{x}) = 0$ if $x_i < \xi_i$ and $\mathbb{I}_i(\bm{x}) = 1$ otherwise. 

Let $V^N_t(\bm{x},k)$ denote the optimal expected total cost over decision epochs $t, t+1, \ldots, T$, starting from state $(\bm{x},k)\in\mathcal{S}$, and let the terminal cost, $V^N_T(\bm{x},k)$, be equal to the function $C(\bm{x}) \triangleq \sum_{i\in\mathcal{N}}\mathbb{I}_i(\bm{x})c^i_u$ for all $k$. This terminal cost function essentially assigns a corrective maintenance cost to failed components, while no costs are incurred for non-failed components. Then, by the principle of optimality,   $V^N_t(\bm{x},k)$ satisfies the following recursive Bellman optimality equations
\begin{align}\label{optimality}
V^N_t(\bm{x},k) = \min_{\bm{a} \in \mathcal{A}(\bm{x})} \Bigg\{C(\bm{x},\bm{a}) + \E_{\bm{Z}}\bigg[  V^N_{t+1}\Big(\bm{x'}+\bm{Z},k+\sum_{i\in \mathcal{N}}Z_i \Big) \bigg]  \Bigg\}, 
\end{align}
where $\bm{Z} = (Z_1,Z_2,\ldots, Z_N)$ is an $N$-dimensional random vector with $Z_i \sim  NB\Big(\alpha_0 + k, \frac{\beta_0+N\cdot t}{\beta_0 + N\cdot t + 1}\Big)$ (all $Z_i$'s are independent and identically distributed), $\E_{\bm{Z}}$ denotes that the expectation is taken with respect to $\bm{Z}$, and $\bm{x'} =\big(x'_1,x'_2,\ldots, x'_N\big)$  with 
\begin{align} \label{newinstalled}	
x'_i = \begin{cases}
  x_i  &  \text{if }a_i =0, \\
  0 & \text{if }a_i = 1.
\end{cases}
\end{align}
We also refer to $V^N_t(\bm{x},k)$ as the value function of the original MDP. The first part between the brackets is the direct costs while the second part is the expected future costs of taking action $\bm{a}$ in state $(\bm{x},k)$. Specifically, each component's deterioration accumulates further according to the posterior predictive distribution that corresponds to state $(\bm{x},k)$, and $k$ increases with the sum of all those increments. Systems that are maintained start with an as-good-as-new component, which is governed by the auxiliary vector $\bm{x'}$ which ensures that $x'_i=0$ when $a_i=1$ (see Equation \eqref{newinstalled}).  The formulation in \eqref{optimality} shows that the learning process about the unknown rate $\lambda$ is pooled through the evolution of the common state variable $k$, while the future evolution of all individual deterioration processes depends on all pooled information and the parameter $N$.
The existence of an optimal policy in this setting is guaranteed, see e.g., Proposition 3.4 of \cite{bertsekas1978stochastic}. 

Observe that the minimum total expected cost for $N$ systems over the complete lifespan of length $T$ is given by $V^N_{0}(\bm{0},0)$ ($\bm{0}$ denotes the N-dimensional zero vector) which can be found by solving Equation \eqref{optimality} via backward induction. It is however clear from the formulation in \eqref{optimality}, that as the number of systems grows, the problem will increasingly suffer from the curse of dimensionality: The cardinality of both the action and state space grow exponentially in $N$. 
Instead of solving \eqref{optimality} (referred to as the original MDP) directly, we will therefore construct an alternative MDP and show that the original MDP can be decomposed into $N$ of these alternative MDPs: One for each system $i\in \mathcal{N}$. This decomposition is imperative as it allows us to (i) analyze the benefits of pooling of learning when $N$ is relatively large without suffering from the curse of dimensionality, and (ii) establish structural properties of the optimal policy. 

To this end, let $\tilde{V}^{N,i}_t(x,k)$ denote the optimal expected total cost  for system $i\in \mathcal{N}$, over decision epochs $t, t+1, \ldots, T$, starting from state $(x,k)\in\N^2_{0}$, and let the terminal cost, $\tilde{V}^{N,i}_T(x,k)$, be equal to the function $C_i(x) \triangleq \mathbb{I}_i(x)c^i_u$ for all $k$. Then,  $\tilde{V}^{N,i}_t(x,k)$ satisfies the following recursive Bellman optimality equations
\begin{align}\label{decomposedMPD}
\tilde{V}^{N,i}_t(x,k) = \min_{a \in \mathcal{A}(x)}\Bigg\{C_i(x,a) + \E_{(Z,K)}\bigg[\tilde{V}^{N,i}_{t+1}\Big(x\cdot(1-a)+Z,k+Z+K\Big) \bigg]  \Bigg\}, 
\end{align}
where  $Z \sim NB\Big(\alpha_0 + k, \frac{\beta_0+N\cdot t}{\beta_0 + N\cdot t + 1}\Big)$, $K \sim NB\Big( (N-1)\cdot(\alpha_0 + k), \frac{\beta_0+N\cdot t}{\beta_0 + N\cdot t + 1}\Big)$, $\E_{(Z,K)}$ denotes that the expection is taken with respect to $Z$ and $K$,  and
\begin{align}\label{directcostalternative}
C_i(x,a) \triangleq a \big(1-\mathbb{I}_i(x)\big)c^i_p +  \mathbb{I}_i(x)c^i_u.
\end{align}
The indicator functions and actions (spaces) are as defined before. It is noteworthy to mention that the formulation in \eqref{decomposedMPD} in fact resembles a single component optimization problem in isolation, where the transition probabilities depend on both parameter $N$ and the state variable $k$. The evolution of the state variable $k$ depends on the random deterioration increment of the component ($Z$) but it also accounts for the evolution of the other components through the random variable $K$. Below we present the decomposition result, which establishes that the value function of the original MDP is the sum of all $N$ value functions of the alternative MDPs.

\begin{theorem}\label{decomposition}
For each $t \in \{0,1,\ldots, T\}$, we have
$$V^N_t(\bm{x},k) = \sum_{i\in\mathcal{N}} \tilde{V}^{N,i}_t(x_i,k), \quad \mbox{for all } (\bm{x},k)\in\mathcal{S}.$$
\end{theorem}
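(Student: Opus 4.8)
The plan is to prove the identity by backward induction on the decision epoch $t$, descending from $t=T$ to $t=0$. For the base case $t=T$, both sides collapse to the terminal cost: $V^N_T(\bm{x},k)=C(\bm{x})=\sum_{i\in\mathcal{N}}\mathbb{I}_i(\bm{x})c^i_u=\sum_{i\in\mathcal{N}}C_i(x_i)=\sum_{i\in\mathcal{N}}\tilde{V}^{N,i}_T(x_i,k)$, where I use that $\mathbb{I}_i(\bm{x})$ depends on $\bm{x}$ only through the coordinate $x_i$, so $\mathbb{I}_i(\bm{x})=\mathbb{I}_i(x_i)$.

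For the inductive step, I assume $V^N_{t+1}(\bm{x},k)=\sum_{i\in\mathcal{N}}\tilde{V}^{N,i}_{t+1}(x_i,k)$ for all $(\bm{x},k)\in\mathcal{S}$ and substitute this into the Bellman recursion \eqref{optimality}. Using the additive structure of $C(\bm{x},\bm{a})$ in \eqref{directcost} and of the substituted continuation value, the expression inside the outer minimization becomes a sum over $i\in\mathcal{N}$ of the terms $a_i\big(1-\mathbb{I}_i(x_i)\big)c^i_p+\mathbb{I}_i(x_i)c^i_u+\E_{\bm{Z}}\big[\tilde{V}^{N,i}_{t+1}\big(x'_i+Z_i,\,k+\sum_{j\in\mathcal{N}}Z_j\big)\big]$, with $x'_i=x_i(1-a_i)$ as in \eqref{newinstalled}. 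Two facts then let me decouple this. First, the constraint defining $\mathcal{A}(\bm{x})$ acts on each $a_i$ only through $x_i$, so $\mathcal{A}(\bm{x})=\prod_{i\in\mathcal{N}}\mathcal{A}(x_i)$ is a (finite) product set; since the $i$-th summand depends on $\bm{a}$ only through $a_i$, the minimization over $\bm{a}$ distributes across the sum. Second, splitting $\sum_{j\in\mathcal{N}}Z_j=Z_i+\sum_{j\neq i}Z_j$ and recalling that the $Z_j$ are i.i.d. $NB\big(\alpha_0+k,\frac{\beta_0+N\cdot t}{\beta_0+N\cdot t+1}\big)$, the convolution property of the negative binomial distribution (independent negative binomials with a common success probability add, with shape parameters summed) gives $\sum_{j\neq i}Z_j\sim NB\big((N-1)(\alpha_0+k),\frac{\beta_0+N\cdot t}{\beta_0+N\cdot t+1}\big)$, independent of $Z_i$. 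Hence the pair $(Z_i,\sum_{j\neq i}Z_j)$ has exactly the joint law of the pair $(Z,K)$ in \eqref{decomposedMPD}, so the $i$-th minimized summand equals $\min_{a\in\mathcal{A}(x_i)}\big\{C_i(x_i,a)+\E_{(Z,K)}\big[\tilde{V}^{N,i}_{t+1}\big(x_i(1-a)+Z,\,k+Z+K\big)\big]\big\}=\tilde{V}^{N,i}_t(x_i,k)$, using also \eqref{directcostalternative}. Summing over $i\in\mathcal{N}$ closes the induction.

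The only genuinely delicate point is the interchange of the joint minimization over $\bm{a}$ with the sum over systems: it hinges on the product structure of $\mathcal{A}(\bm{x})$ together with the fact that, after substituting the inductive hypothesis, the objective is separable in the pairs $(a_i,x_i)$ — and crucially that the coupling between systems enters only through the scalar statistic $k$, which no action touches. The distributional identification of $(Z_i,\sum_{j\neq i}Z_j)$ with $(Z,K)$ is routine once the negative-binomial convolution property is invoked, but it is precisely this step that explains and dictates the particular form of the random variable $K$ in the alternative MDP \eqref{decomposedMPD}.
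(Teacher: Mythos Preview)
Your proposal is correct and follows essentially the same route as the paper's proof: backward induction on $t$, with the base case given by the additive terminal cost, and the inductive step combining the decomposability of $C(\bm{x},\bm{a})$, the induction hypothesis, the negative-binomial convolution identity $\sum_{j\neq i}Z_j\sim NB\big((N-1)(\alpha_0+k),\tfrac{\beta_0+N t}{\beta_0+N t+1}\big)$, and the separation of the joint minimization over the product action space $\mathcal{A}(\bm{x})=\prod_{i}\mathcal{A}(x_i)$. If anything, you are slightly more explicit than the paper about why the minimization distributes and about the independence of $Z_i$ and $\sum_{j\neq i}Z_j$, but the argument is the same.
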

\begin{proof}
We prove the statement using induction on $t$. Note that the terminal values can be decomposed: $$V^N_T(\bm{x},k) = C(\bm{x}) = \sum_{i\in\mathcal{N}}\mathbb{I}_i(\bm{x})c^i_u = \sum_{i\in \mathcal{N}} \mathbb{I}_i(x_i)c^i_u = \sum_{i\in \mathcal{N}} \tilde{V}^{N,i}_T(x_i,k),$$ 
so that the statement trivially holds for the base case $T$.  Assume that the statement holds for some $t+1, 0<t+1\leq T$, we will show that the statement then also holds for $t$. 
We have, by Equation \eqref{optimality}, 
\begin{align}
V^N_t(\bm{x},k) &= \min_{\bm{a} \in \mathcal{A}(\bm{x})} \Bigg\{C(\bm{x},\bm{a}) + \E_{\bm{Z}}\bigg[  V^N_{t+1}\Big(\bm{x'}+\bm{Z},k+\sum_{j\in\mathcal{N}} Z_j \Big) \bigg]  \Bigg\} \nonumber \\
&\stackrel{(a)}{=}  \min_{\bm{a} \in \mathcal{A}(\bm{x})} \Bigg\{ \sum_{i\in\mathcal{N}} C_i(x_i,a_i) + \E_{\bm{Z}}\bigg[  V^N_{t+1}\Big(\bm{x'}+\bm{Z},k+\sum_{j\in\mathcal{N}} Z_j \Big) \bigg] \Bigg\} \nonumber \\
&\stackrel{(b)}{=} \min_{\bm{a} \in \mathcal{A}(\bm{x})} \Bigg\{ \sum_{i\in\mathcal{N}} C_i(x_i,a_i) + \E_{\bm{Z}}\bigg[ \sum_{i\in \mathcal{N}}  \tilde{V}^{N,i}_{t+1}\Big(x'_i+Z_i , k+\sum_{j\in\mathcal{N}} Z_j \Big) \bigg]\Bigg\}  \nonumber \\
&\stackrel{(c)}{=}  \min_{\bm{a} \in \mathcal{A}(\bm{x})} \Bigg\{ \sum_{i\in\mathcal{N}} C_i(x_i,a_i) + \sum_{i\in\mathcal{N}} \E_{\bm{Z}}\bigg[   \tilde{V}^{N,i}_{t+1}\Big(x'_i+Z_i , k+Z_i+\sum_{j\in\mathcal{N}\setminus \{i\}} Z_j \Big) \bigg]\Bigg\}  \nonumber \\
&\stackrel{(d)}{=} \sum_{i\in\mathcal{N}} \min_{a_i \in \mathcal{A}(x_i)} \Bigg\{ C_i(x_i,a_i) +  \E_{(Z_i,K)} \bigg[   \tilde{V}^{N,i}_{t+1}\Big(x_i(1-a_i)+Z_i, k+Z_i+K \Big) \bigg]\Bigg\}  \nonumber \\
&= \sum_{i\in\mathcal{N}} \tilde{V}^{N,i}_t(x_i,k), \nonumber 
\end{align}
where $(a)$ is because the direct costs can be decomposed (see \eqref{directcost} and \eqref{directcostalternative}), $(b)$ is due to the induction hypothesis, $(c)$ is because of the linearity of an expectation and extracting $Z_i$ from the summation, $(d)$ is because the sum of $N-1$ independent Negative Binomially distributed random variables with $r=\alpha_0+k$ and $p=\frac{\beta_0+N\cdot t}{\beta_0 + N\cdot t + 1}$ is again Negative Binomially distributed with the same $p$ but with $r=(N-1)\cdot(\alpha_0+k)$ \citep[see, e.g.,][Chapter 6]{dasgupta2010fundamentals}, and the last equality follows from using Equation \eqref{decomposedMPD}. 
\end{proof}

The decomposition in Theorem \ref{decomposition} reduces the computational burden of solving \eqref{optimality} significantly. It collapses the original, high-dimensional MDP into $N$ 2-dimensional MDPs with a binary action space, each with their own cost structure and failure threshold, while still taking into account pooled learning across the $N$ systems. Next to reducing this computational burden, the decomposition result also eases the process of establishing structural properties on a system level, which is the topic of the next section.

Next to the trivial conditions that the action space and cost function should be decomposable and that actions should not influence the future evolution of the deterioration processes, there are two essential conditions required for our decomposition result to hold. As our proof relies crucially on these two general conditions, we believe they can guide future research on pooled learning in MDPs. Therefore, we discuss these conditions in the remark below.

\begin{remark} The decomposition result can be applied to other high-dimensional Bayesian MDPs in which data is pooled to learn a common but unknown parameter. Specifically, the decomposition result necessitates two conditions related to the underlying conjugate pair of the MDP. First, the sufficient statistic in the conjugate pair for learning the parameter should be linear in the observations. This condition enables step $(c)$ in the proof where we extract $Z_i$ from the summation. Secondly, the resulting posterior predictive distribution should be closed under convolutions. This enables step $(d)$ in the proof where we use the fact that the convolution of $N-1$ Negative Binomially distributed random variables is again Negative Binomially distributed. One other conjugate pair -- next to the Gamma - Poisson pair used in this paper -- that, for instance, satisfies these conditions and which is used very often in the OM literature is the Normal - Normal pair. This pair is generally adopted when the mean of a Normal distribution with known variance is unknown and needs to be learned.
\end{remark}

\section{Structural properties} \label{structAlternative}
In this section we establish structural properties of the alternative MDP, which then carry over to the original MDP through our decomposition result. We first derive monotonicity properties of the value function, and then use these properties to establish the optimality of a control limit policy. We finally show that the control limit approaches the failure level as the pooled data increases. To that end, we first rewrite \eqref{decomposedMPD} into the conventional formulation for single component optimization problems:
\begin{align}\label{eq:decomposedAlt}
&\tilde{V}^{N,i}_t(x,k) = \quad &  \nonumber \\
&\quad \begin{cases}
    c^i_u +  \E_{(Z,K)}\bigg[\tilde{V}^{N,i}_{t+1}\Big(Z,k+Z+K\Big) \bigg], & \text{if } x \geq \xi_i, \\
 \mbox{min}\left\{c^i_p +\E_{(Z,K)}\bigg[\tilde{V}^{N,i}_{t+1}\Big(Z,k+Z+K\Big) \bigg]; \E_{(Z,K)}\bigg[\tilde{V}^{N,i}_{t+1}\Big(x+Z,k+Z+K\Big) \bigg] \right\}, & \text{if } x < \xi_i. \\
  \end{cases}&
\end{align}
The first case in Equation \eqref{eq:decomposedAlt} is because failed components must be replaced correctively at cost $c^i_u$. If the component’s deterioration level is less than $\xi_i$, we can either perform a preventive
replacement, which costs $c^i_p$, or leave the component in operation until the next decision epoch at no cost. The terminal costs are as introduced before. The next result establishes the monotonicity of the value function $\tilde{V}^{N,i}_t(x,k)$ in $x$ and $k$.
\begin{proposition}\label{valuemonotoneX}
For each $t\in\{0,1,\ldots,T\}$ and $i\in \mathcal{N}$, the value function $\tilde{V}^{N,i}_t(x,k)$ is:
\begin{enumerate}[label=(\roman*)]
\item non-decreasing in $x$, and
\item non-decreasing in $k$.
\end{enumerate}
\end{proposition}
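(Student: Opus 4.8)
The plan is to prove both monotonicity statements simultaneously by backward induction on $t$, exploiting the recursive structure in Equation \eqref{eq:decomposedAlt}. For the base case $t=T$, the terminal value is $\tilde{V}^{N,i}_T(x,k) = \mathbb{I}_i(x)c^i_u$, which is manifestly non-decreasing in $x$ (it jumps from $0$ to $c^i_u$ at $x=\xi_i$) and constant in $k$, so both claims hold trivially. For the induction step, assume $\tilde{V}^{N,i}_{t+1}(x,k)$ is non-decreasing in each argument; I then need to show the same for $\tilde{V}^{N,i}_t(x,k)$.

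For monotonicity in $x$, the key observations are: (i) on the region $x\geq\xi_i$ the value function does not depend on $x$ at all, so it is constant (hence non-decreasing) there; (ii) on the region $x<\xi_i$, the first term inside the $\min$ (the preventive-replacement option) does not depend on $x$, while the second term $\E_{(Z,K)}[\tilde{V}^{N,i}_{t+1}(x+Z,k+Z+K)]$ is non-decreasing in $x$ by the induction hypothesis applied pointwise and preservation of monotonicity under taking expectations; a minimum of a constant and a non-decreasing function is non-decreasing, so the claim holds within $x<\xi_i$. The remaining delicate point is the jump across the boundary, i.e. comparing the value at $x=\xi_i-1$ with the value at $x=\xi_i$: I need that $\tilde{V}^{N,i}_t(\xi_i-1,k)\leq \tilde{V}^{N,i}_t(\xi_i,k)$. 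This follows because at $x=\xi_i-1$ the service provider has the preventive option available at cost $c^i_p < c^i_u$, and choosing that option yields exactly $c^i_p + \E[\tilde{V}^{N,i}_{t+1}(Z,k+Z+K)]$, whereas at $x=\xi_i$ the value equals $c^i_u + \E[\tilde{V}^{N,i}_{t+1}(Z,k+Z+K)]$; since $c^i_p\leq c^i_u$ and the expectation terms are identical, and the value at $\xi_i-1$ is a minimum that is no larger than this preventive-option value, the inequality follows.

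For monotonicity in $k$, fix $x$ and take $k_1 \leq k_2$. The parameter $k$ enters the recursion in two places: directly as the first coordinate of the running total inside $\tilde{V}^{N,i}_{t+1}(\,\cdot\,,k+Z+K)$, and through the distribution of $(Z,K)$, since $Z\sim NB(\alpha_0+k,\,p)$ and $K\sim NB((N-1)(\alpha_0+k),\,p)$ with $p = \frac{\beta_0+Nt}{\beta_0+Nt+1}$ independent of $k$. For the direct dependence, the induction hypothesis gives $\tilde{V}^{N,i}_{t+1}(x+z,\,k_1+z+m)\leq \tilde{V}^{N,i}_{t+1}(x+z,\,k_2+z+m)$ for every realization $z,m$, and this pointwise inequality is preserved under expectation provided we use the \emph{same} distribution for $(Z,K)$; this handles the comparison with the $k$ in the distribution held fixed. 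For the distributional dependence, I would invoke Lemma \ref{increasingStochastic}: increasing $k$ makes $Z$ (and analogously $K$) stochastically larger in the usual stochastic order, and since $\tilde{V}^{N,i}_{t+1}$ is non-decreasing in its first coordinate (induction hypothesis) and also non-decreasing in its second coordinate, the map $(z,m)\mapsto \tilde{V}^{N,i}_{t+1}(x+z,\,k_2+z+m)$ (resp. $(z,m)\mapsto\tilde{V}^{N,i}_{t+1}(z,\,k_2+z+m)$ in the replacement case) is non-decreasing in $z$ and in $m$, so its expectation increases when the laws of $Z$ and $K$ move up stochastically. Chaining the two effects gives $\E_{(Z,K)|k_1}[\tilde{V}^{N,i}_{t+1}(\cdot,k_1+Z+K)]\leq \E_{(Z,K)|k_2}[\tilde{V}^{N,i}_{t+1}(\cdot,k_2+Z+K)]$ for both branches (preventive-replacement and continuation); since each branch of \eqref{eq:decomposedAlt} is then non-decreasing in $k$ (the constants $c^i_u,c^i_p$ do not depend on $k$), and the minimum of two non-decreasing functions is non-decreasing, we conclude $\tilde{V}^{N,i}_t(x,k)$ is non-decreasing in $k$.

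I expect the main obstacle to be the careful bookkeeping for monotonicity in $k$: one must disentangle the two distinct channels through which $k$ acts (as an additive shift inside the value function and as a parameter of the Negative Binomial laws of $Z$ and $K$), and argue each cleanly — the first by a pointwise coupling with the distribution frozen, the second by stochastic dominance (Lemma \ref{increasingStochastic}) combined with the already-established monotonicity of $\tilde{V}^{N,i}_{t+1}$ in \emph{both} of its arguments. A clean way to package this is to note that for $k_1 \leq k_2$ one can couple so that $Z^{(1)}\leq Z^{(2)}$ and $K^{(1)}\leq K^{(2)}$ almost surely (using that $NB(r_1,p)$ is stochastically dominated by $NB(r_2,p)$ for $r_1\leq r_2$, which also underlies Lemma \ref{increasingStochastic}), after which every inequality reduces to a single pointwise application of the induction hypothesis. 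The $x$-monotonicity is comparatively routine; its only subtlety is the boundary jump at $\xi_i$, which is dispatched by the $c^i_p\leq c^i_u$ argument above.
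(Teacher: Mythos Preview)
Your proposal is correct and follows essentially the same approach as the paper: backward induction on $t$, monotonicity of the continuation term via the induction hypothesis for part (i), and stochastic dominance of $Z$ and $K$ (Lemma~\ref{increasingStochastic}) combined with monotonicity of $\tilde{V}^{N,i}_{t+1}$ in both arguments for part (ii). Your write-up is in fact somewhat more explicit than the paper's in two places --- you handle the boundary jump at $x=\xi_i$ via the $c^i_p<c^i_u$ comparison (the paper leaves this implicit), and you separate the two channels through which $k$ acts (additive shift versus distributional shift) before combining them via coupling, whereas the paper invokes \citep[][Theorem 1.A.3(b)]{shaked2007stochastic} directly --- but these are refinements of the same argument rather than a different route.
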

\begin{proof}
See \ref{proofOfPropValueCBM}.
\end{proof}
Proposition \ref{valuemonotoneX} implies that (i) if a component is more deteriorated or (ii) when the total amount of deterioration increments is higher, we expect to incur higher costs. This is intuitive: A higher level of deterioration increases the probability of a costly failure and/or the need for preventive replacement, while a higher total amount of deterioration increments implies that all components are deteriorating relatively fast (i.e. $\lambda$ is larger). 

By Proposition \ref{decomposition}, we also conclude that the value function $V^N_t(\bm{x},k)$ of the original MDP is non-decreasing in the standard component-wise order in $\bm{x}$, and non-decreasing in $k$. The former means that for any deterioration vectors $\bm{x}$ and $\bm{x'}$ such that $x_i\leq x'_i$ for all $i\in \mathcal{N}$, we have that $V^N_t(\bm{x},k) \leq V^N_t(\bm{x'},k)$. The intuition behind this is similar to the intuition behind Proposition \ref{valuemonotoneX}.

The next result establishes the optimality of a control limit policy for the alternative MDP that depends on the state variable $k$. 

\begin{proposition}\label{controllimit}
For each $t\in\{0,1,\ldots,T-1\}$, $k\in\N_{0}$, and $i\in \mathcal{N}$, there exists a control limit $\delta_i^{(k,t)}$, $0<\delta_i^{(k,t)} \leq \xi_i$, such that the optimal action is to carry out a replacement if and only if $x\geq\delta_i^{(k,t)}$.  
\end{proposition}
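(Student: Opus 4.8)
The plan is to prove Proposition \ref{controllimit} by a standard ``$\Delta$-function'' argument: for fixed $t < T$, $k$, and $i$, define the benefit of postponing a replacement at deterioration level $x < \xi_i$ as
\begin{align*}
\Delta_i^{(k,t)}(x) \triangleq \E_{(Z,K)}\Big[\tilde{V}^{N,i}_{t+1}\big(x+Z,k+Z+K\big)\Big] - \E_{(Z,K)}\Big[\tilde{V}^{N,i}_{t+1}\big(Z,k+Z+K\big)\Big] - c^i_p,
\end{align*}
so that by \eqref{eq:decomposedAlt} the optimal action at $x < \xi_i$ is to replace preventively if and only if $\Delta_i^{(k,t)}(x) \geq 0$ (breaking ties in favour of replacement). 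The first step is to observe that the second expectation and $c^i_p$ do not depend on $x$, so it suffices to show that the map $x \mapsto \E_{(Z,K)}[\tilde{V}^{N,i}_{t+1}(x+Z,k+Z+K)]$ is non-decreasing in $x$; this is immediate from Proposition \ref{valuemonotoneX}(i), since $\tilde{V}^{N,i}_{t+1}$ is non-decreasing in its first argument and $x+Z$ is non-decreasing in $x$ pathwise. Hence $\Delta_i^{(k,t)}(x)$ is non-decreasing in $x$ on $\{0,1,\ldots,\xi_i-1\}$, so the set of levels at which replacement is optimal is an up-set $\{x : x \geq \delta\}$ for some threshold $\delta$.

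The second step is to define the control limit explicitly and check the boundary constraints $0 < \delta_i^{(k,t)} \leq \xi_i$. I would set
\begin{align*}
\delta_i^{(k,t)} \triangleq \min\Big\{ x \in \{1,2,\ldots,\xi_i\} : \text{replacement is optimal at level } x \Big\},
\end{align*}
with the convention that the minimum over the empty set is $\xi_i$ (this cannot actually occur since level $\xi_i$ is a failure state where corrective replacement is forced, so the set always contains $\xi_i$). By construction $\delta_i^{(k,t)} \leq \xi_i$. The strict lower bound $\delta_i^{(k,t)} > 0$ needs a short argument: at $x = 0$ a newly-installed component and a preventively-replaced component are in the identical state, so replacing at level $0$ costs an extra $c^i_p > 0$ with no benefit, i.e. $\Delta_i^{(k,t)}(0) = -c^i_p < 0$ and replacement is strictly suboptimal at $x=0$. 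Combining the up-set structure with $\Delta_i^{(k,t)}(0) < 0$ gives $\delta_i^{(k,t)} \geq 1 > 0$, and the ``if and only if'' characterization then follows: for $x < \xi_i$ the optimal action is replacement iff $\Delta_i^{(k,t)}(x) \geq 0$ iff $x \geq \delta_i^{(k,t)}$, while for $x \geq \xi_i$ replacement is forced and indeed $x \geq \xi_i \geq \delta_i^{(k,t)}$.

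I do not anticipate a serious obstacle here; the proposition is essentially a corollary of Proposition \ref{valuemonotoneX}. The only point requiring mild care is the tie-breaking convention (when $\Delta_i^{(k,t)}(x) = 0$ both actions are optimal, and the statement should be read as asserting existence of \emph{a} control-limit policy, realized by breaking ties toward replacement), and the verification that the threshold genuinely lies in $(0,\xi_i]$ rather than being vacuous, which is handled by the $x=0$ and $x=\xi_i$ endpoint observations above. If one wanted monotonicity of $\delta_i^{(k,t)}$ in $t$ or $k$ that would be substantially harder, but the statement as posed asks only for existence of the threshold for each fixed $(k,t)$, so the monotonicity of the value function in $x$ is the entire engine of the proof.
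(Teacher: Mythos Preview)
Your proposal is correct and follows essentially the same approach as the paper: both arguments hinge on the observation that the cost of leaving the component in operation, $\E_{(Z,K)}[\tilde{V}^{N,i}_{t+1}(x+Z,k+Z+K)]$, is non-decreasing in $x$ by Proposition~\ref{valuemonotoneX}(i), while the cost of preventive replacement is constant in $x$, which immediately yields the threshold structure. Your treatment is in fact slightly more thorough than the paper's, which states the bounds $0<\delta_i^{(k,t)}\leq\xi_i$ but does not explicitly verify them; your endpoint checks at $x=0$ (where $\Delta_i^{(k,t)}(0)=-c^i_p<0$) and at $x=\xi_i$ (forced corrective replacement) close that small gap.
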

\begin{proof}
See \ref{proofCL}. 
\end{proof}

Proposition \ref{controllimit} shows that the control limit at each time of each component, $\delta_i^{(k,t)}$, depends in real-time on the shared learning process across all components through pooling data via the state variable $k$. The optimality of a control limit policy itself is not only intuitive and convenient for the implementation of this optimal policy in practice, it can also be exploited to further decrease the computational burden of solving the original MDP. That is, existing algorithms that rely on these structural properties such as the modified policy iteration algorithm \citep[see][Section 6.5]{puterman2014markov} can be used to efficiently solve the alternative MDP, and hence the original MDP. 

Conceivably, one would expect that as we learn from the pooled data that $\lambda$ is larger (through a higher $k$) and everything else is fixed, we would impose a lower control limit per component, i.e. $\delta_i^{(k,t)}$ is non-increasing in $k$. The intuition behind this is that because it is more likely that deterioration increments will take on higher values (see Lemma \ref{increasingStochastic}), we should replace a component earlier. Although such a non-increasing $\delta_i^{(k,t)}$  would indeed be intuitive, we found numerically that this is in general not true. Specifically, we found that the control limit usually decreases in $k$ first, as expected, but that it always increases eventually to $\xi_i$ as $k$ grows large, in which case it is never optimal to do preventive maintenance. See Figure \ref{asym} for an excellent illustration of this behavior. In this figure, we use, $N=2$, $T=50$, $\alpha_0=4$, and $\beta_0=4$,  and use the same values for the parameters of both components: $c_p=1$, $c_u=10$, and $\xi=10$, and plot the optimal control limit at three decision epochs (10, 25, and 40) as a function of $k$.

\begin{figure}[htb!]
\begin{center}
\includegraphics[width=0.85\textwidth]{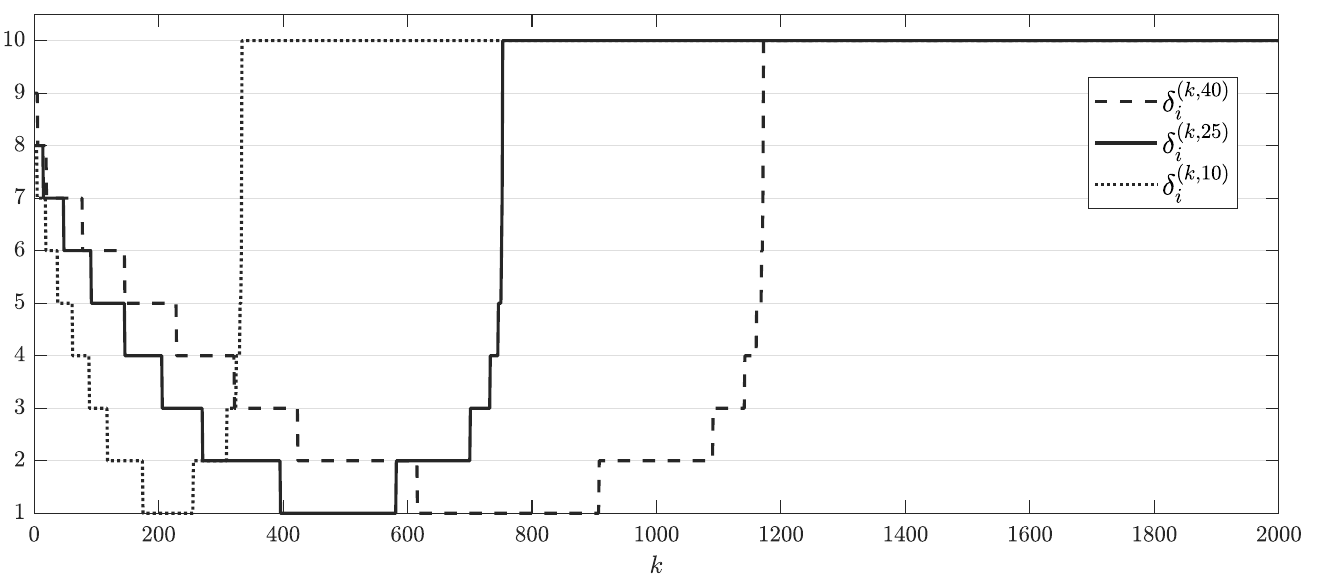}
\caption{The optimal control limit (value on $y$-axis) as a function of $k$ at three different decision epochs. The optimal control limit first decreases in $k$, but then increases to $\xi$ as $k$ grows large.}
\label{asym}
\end{center}
\end{figure}

This limiting behavior, which breaks the monotonic behavior of $\delta_i^{(k,t)}$ in $k$, is formalized in the proposition below. 

\begin{proposition}\label{controllimitAsymptote}
For each $t\in\{0,1,\ldots,T-1\}$ and $i\in \mathcal{N}$, we have $\lim\limits_{k\rightarrow \infty} \delta_i^{(k,t)} = \xi_i$. 
\end{proposition}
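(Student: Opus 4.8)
The plan is to show that as $k \to \infty$, preventive maintenance becomes strictly suboptimal at every deterioration level $x < \xi_i$, which by Proposition \ref{controllimit} forces $\delta_i^{(k,t)} = \xi_i$. Fix $t \in \{0,1,\ldots,T-1\}$ and $i \in \mathcal{N}$. By \eqref{eq:decomposedAlt}, it suffices to prove that for every $x$ with $0 \le x < \xi_i$, there is a threshold $k^*$ beyond which
\[
\E_{(Z,K)}\Big[\tilde{V}^{N,i}_{t+1}\big(x+Z, k+Z+K\big)\Big] < c^i_p + \E_{(Z,K)}\Big[\tilde{V}^{N,i}_{t+1}\big(Z, k+Z+K\big)\Big],
\]
i.e. that the cost gap between continuing and replacing preventively, $G_k \triangleq \E[\tilde{V}^{N,i}_{t+1}(Z,k+Z+K)] - \E[\tilde{V}^{N,i}_{t+1}(x+Z,k+Z+K)]$, eventually exceeds $-c^i_p$; since $\tilde V$ is non-decreasing in $x$ (Proposition \ref{valuemonotoneX}), $G_k \le 0$, so the claim is that $G_k \to 0$. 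I would prove this by backward induction on $t$: the inductive hypothesis is that for each $x < \xi_i$ the per-state gap $\tilde{V}^{N,i}_{t+1}(x+m, \ell) - \tilde{V}^{N,i}_{t+1}(m, \ell) \to 0$ as $\ell \to \infty$ uniformly in the relevant range of $m$ (base case $t+1 = T$: the terminal cost $C_i$ does not depend on $k$ at all, and for $\ell$ large the argument $m$ will exceed $\xi_i$ with overwhelming probability so both terms equal $c^i_u$ — more carefully, one shows the contribution is controlled).

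The key probabilistic input is that the increment $Z \sim NB(\alpha_0 + k, p_t)$ with $p_t = \tfrac{\beta_0 + Nt}{\beta_0+Nt+1}$ has mean $(\alpha_0+k)(1-p_t)/p_t = (\alpha_0+k)/(\beta_0+Nt)$, which grows linearly in $k$, while its standard deviation grows only like $\sqrt{k}$. Hence for any fixed $x < \xi_i$, $\P(Z \ge \xi_i) \to 1$ and in fact $\P(x + Z \ge \xi_i) \to 1$ as $k \to \infty$: with probability tending to one, the component has failed by the next epoch regardless of whether it was just replaced or not. On the event $\{Z \ge \xi_i\}$ (resp. $\{x+Z \ge \xi_i\}$) the two value functions are evaluated at states that are both "failed", and I would argue that on this dominant event the two conditional expectations of $\tilde V^{N,i}_{t+1}$ differ by a vanishing amount, invoking the inductive hypothesis together with the fact that $k + Z + K \to \infty$. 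The remaining event has probability $o(1)$, and since the value function is bounded on that event by a constant independent of $k$ (the total cost over the finite horizon is at most $\sum_{s=t+1}^{T}\max_i c_u^i$, say $M$), its contribution to $G_k$ is $O(\P(x+Z < \xi_i)) = o(1)$. Combining, $G_k \to 0 > -c_p^i$, so for $k$ large it is strictly better to continue, giving $\delta_i^{(k,t)} = \xi_i$.

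The main obstacle is making the induction tight: one needs a \emph{uniform} control on the gap $\tilde V^{N,i}_{t+1}(x+m,\ell) - \tilde V^{N,i}_{t+1}(m,\ell)$ that decays as $\ell \to \infty$ but must hold simultaneously for the whole random range of $m = $ (previous state) $+ Z$, whose typical size itself diverges with $k$. The clean way around this is to phrase the induction hypothesis directly in terms of the quantity of interest — "for every fixed displacement $d \le \xi_i$, $\sup_{m \ge 0}\big(\tilde V^{N,i}_{s}(m+d,\ell) - \tilde V^{N,i}_{s}(m,\ell)\big) \to 0$ as $\ell \to \infty$" — and to exploit the boundedness $0 \le \tilde V^{N,i}_s \le M$ together with monotonicity in $x$, so that the gap is a bounded non-negative quantity and one only needs the probability of the "bad" (non-failed-after-one-step) event to vanish, which it does at a geometric-type rate because the Negative Binomial mean outgrows its spread. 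A secondary subtlety is that $K$ (the pooled increment from the other $N-1$ systems) only helps — it pushes $k$ further up — so it can be handled by conditioning on $Z$ first and bounding crudely, or simply noting $k + Z + K \ge k$. I would also double-check the edge case $N = 1$, where $K \equiv 0$ (the $NB(0,\cdot)$ degenerate at $0$), for which the argument goes through verbatim with $k + Z$ in place of $k+Z+K$.
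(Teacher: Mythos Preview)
Your core strategy—split on the event $\{Z \ge \xi_i\}$, use that its probability tends to one, and bound the complement via the uniform bound on the value function—is correct and is essentially what drives the paper's argument as well. But you have over-engineered it with a backward induction whose hypothesis is both unnecessary and false. Your proposed inductive claim, that $\sup_{m \ge 0}\big(\tilde V^{N,i}_{s}(m+d,\ell) - \tilde V^{N,i}_{s}(m,\ell)\big) \to 0$ as $\ell \to \infty$, fails already at the base case $s=T$: take $m = \xi_i - d$ (with $1\le d \le \xi_i$) so that $m+d = \xi_i$ is failed while $m$ is not; then the gap equals $c^i_u$ for every $\ell$. The same boundary obstruction persists for every $s<T$, so the uniform-gap hypothesis cannot be carried through.

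The missing observation is much simpler than your induction: by the first line of \eqref{eq:decomposedAlt}, $\tilde V^{N,i}_{t+1}(y,k')$ is \emph{constant in $y$ on $\{y \ge \xi_i\}$}, since a failed component is always replaced correctively and the resulting cost depends only on $k'$. Hence on $\{Z \ge \xi_i\}$ the integrand $\tilde V^{N,i}_{t+1}(x+Z,k+Z+K) - \tilde V^{N,i}_{t+1}(Z,k+Z+K)$ is \emph{exactly} zero, not merely ``vanishing by induction''. This immediately gives $|G_k| \le (T-t)\,c^i_u \cdot \P(Z < \xi_i) \to 0$, with no backward induction on $t$ required. The paper reaches the same conclusion via a slightly different route: it couples the sequence $(Z_k)_k$ on a common probability space as $Z_k \sim Y + \sum_{j=1}^k X_j$ with i.i.d.\ geometric $X_j$, invokes the Strong Law of Large Numbers to get $Z_k \to \infty$ almost surely, and then applies the Dominated Convergence Theorem. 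Once you make the constancy observation above, your elementary tail-probability bound is arguably cleaner than the paper's coupling.
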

\begin{proof}
See \ref{proofk}. 
\end{proof}
While Proposition \ref{controllimitAsymptote} may initially appear counterintuitive, it can be heuristically explained as follows. When $k$ grows very large and everything else is kept fixed, the random deterioration per period grows so large that any component will fail with certainty solely due to the one-period deterioration. In this case, if a component is still working at a certain decision epoch and has deterioration level $x$ $(<\xi_i)$, performing preventive maintenance will induce an extra cost of $c^i_p$ because in the next period, the component will fail anyway, regardless of the value of $x$. To make this argument more explicit, consider a component with deterioration level $x<\xi_i$ at decision epoch $T-1$. As we only have the terminal cost at time $T$, we can readily obtain an expression for the optimality equation from the recursive Bellman equations in \eqref{eq:decomposedAlt}:
\begin{align}\label{eq:decomposedAltAsymtpote}
\tilde{V}^{N,i}_{T-1}(x,k) = \mbox{min}\Biggl\{ \underbrace{c^i_p +\P\big[Z\geq \xi_i\big] \cdot c^i_u + \left(1-\P\big[Z\geq \xi_i\big]\right)\cdot c^i_p}_{\text{preventive maintenance}}\ ;\ \underbrace{\P\big[Z\geq \xi_i-x\big] \cdot c^i_u + \left(1-\P\big[Z\geq \xi_i-x\big]\right)\cdot c^i_p}_{\text{leave in operation}}\Biggl\} .
\end{align}
It is clear that preventive maintenance in this state is not optimal if and only if the following holds: 
\begin{align}\label{eq:decomposedAltAsymtpoteEqn}
c^i_p +\P\big[Z\geq \xi_i\big] \cdot c^i_u + \left(1-\P\big[Z\geq \xi_i\big]\right)\cdot c^i_p > \P\big[Z\geq \xi_i-x\big] \cdot c^i_u + \left(1-\P\big[Z\geq \xi_i-x\big]\right)\cdot c^i_p.
\end{align}
Since the random variable $Z$ is increasing in $k$ (see Lemma \ref{increasingStochastic}), one can show that $\P\big[Z\geq \xi_i\big]\rightarrow 1$ and $\P\big[Z\geq \xi_i-x\big]\rightarrow 1$ for each $x<\xi_i$ as $k\rightarrow \infty$. This implies, using \eqref{eq:decomposedAltAsymtpote}, that at decision epoch $T-1$ as $k\rightarrow \infty$, leaving the component in operation costs $c_u^i$, while performing preventive maintenance costs $c^i_p +c_u^i$ (an extra cost of $c^i_p$). Since $c^i_p>0$, Equation \eqref{eq:decomposedAltAsymtpoteEqn} will always hold for any $x<\xi_i$ at decision epoch $T-1$ as  $k\rightarrow \infty$. In the proof of Proposition \ref{controllimitAsymptote} in \ref{proofk}, we formalize this heuristic argument and do so for each decision epoch $t\in\{0,1,\ldots,T-1\}$.

\section{Numerical study}\label{PoolingNum}
This section reports the results of a comprehensive numerical study in which we assess the benefits of pooling data to learn an a-priori unknown parameter, while optimally maintaining $N$ systems over a finite lifespan $T$. Although the results in the previous sections hold for asymmetric -- in terms of costs and failure thresholds -- systems, we shall focus on symmetric systems in this numerical study. By doing so, the value function $\tilde{V}^{N}_0(0,0)$ (we drop the index $i$ as we consider symmetric systems) gives us the cost per system over its lifespan when the data of $N$ systems is pooled. We can use this cost per system to assess the value of pooling learning as a function of $N$ compared to not pooling. To this end, we define the performance measure $\Delta = 100 \left[1-\frac{\tilde{V}^{N}_0(0,0)}{\tilde{V}^{1}_0(0,0)}\right],$
which is the percentage savings per system over the lifespan when the learning of $N$ systems is pooled compared to not pooling any data for those systems and learning the unknown rate independently from the other systems. 

We first perform an extensive numerical study. Recall that the initial parameter uncertainty is modeled by the random variable  $\Lambda$ which has a Gamma distribution with shape $\alpha_0$ and rate $\beta_0$.  By fixing the mean of $\Lambda$ and subsequently varying its coefficient of variation, we can thus increase or decrease the initial parameter uncertainty. We do so by solving the following set of equations for $\alpha_0$ and $\beta_0$: 
\begin{align*}
 \E[\Lambda] = \frac{\alpha_0}{\beta_0}, \mbox{ and }  cv_{\Lambda} = \frac{1}{\sqrt{\alpha_0}},
\end{align*}
where $cv_{\Lambda}$ is the coefficient of variation of $\Lambda$. This allows us to explicitly study the impact of the uncertainty (in terms of its mean and coefficient of variation) on the pooling effects. Our testbed consists of 2268 instances. These are obtained by permuting all parameter values in Table \ref{tab:testbedChap5}, with the corrective maintenance cost $c_u$ held fixed at 10. These values are representative for the capital goods industry and are in line with the maintenance literature (see, e.g., \citep[][]{van2017maintenance} on typical maintenance costs, and \citep[][]{drent2022real} on initial parameter uncertainty). For each instance of the test bed we compute the relative savings $\% \Delta$. 

\begin{table*}[!htbp]
  \centering
              \fontsize{8pt}{8pt}\selectfont
  \caption{{Input parameter values for numerical study.}}
  \label{tab:testbedChap5}
  \begin{tabularx}{0.6\textwidth}{c X c l}
    \toprule
    & Input parameter& No. of choices&Values\\
                             \midrule
1 & Number of systems, $N$ & 7 &  1, 2, 4, 6, 8, 10, 20      \\
2 & Failure threshold, $\xi$ & 2 & 7, 10\\
3 & Length of lifespan, $T$							        & 3 &  50, 70, 90 \\
4 & Preventive maintenance cost, $c_p$							        & 3 &  0.5, 1, 1.5 \\
5 & Mean of $\Lambda$, $\E[\Lambda]$							        & 3 & 0.5, 0.75, 1 \\
6 & Coefficient of variation of $\Lambda$, $cv_{\Lambda}$						        & 6 & 0.1, 0.25, 0.5, 1, 2, 4  \\
   \bottomrule
  \end{tabularx}
\end{table*}

The results of the numerical study are summarized in Table \ref{tab:resultsTestbedChap5}. In this table, we present the average and maximum relative savings $\% \Delta$. For each value of $N$, we first present the average relative savings for subsets of instances with the same value for a given input parameter of Table \ref{tab:testbedChap5} (row wise), and then present the average results for all instances with that fixed value of $N$ (bottom row), where each average value is accompanied with the maximum value in brackets. 

\begin{table}[!htbp]
\centering
\setlength{\tabcolsep}{4pt}
\caption{{Relative savings ($\%\Delta$) due to pooled learning.}}
\fontsize{8pt}{8pt}\selectfont
\label{tab:resultsTestbedChap5}
\begin{tabular}{lllccccccc}
\toprule
Input &       &       & \multicolumn{6}{c}{$N$} & \\
\cmidrule{4-9}  
parameter & \multicolumn{1}{l}{Value} &       & 2  & 4  & 6 & 8 & 10 & 20  \\
\midrule
\multirow{2}[2]{*}{$\xi$} & \multicolumn{1}{l}{7}                                                            &         &	2.7	(16.4)	&	9.8	(37.2)	&	14.8	(59.2)	&	18.0	(71.2)	&	19.7	(77.2)	&	24.5	(88.5)	\\
& \multicolumn{1}{l}{10}                                                                                                      &    &     3.4	(19.7)	&	9.9	(36.8)	&	14.5	(59.4)	&	17.1	(71.8)	&	19.1	(79.7)	&	22.7	(89.2)	\\
\addlinespace
\multirow{3}[2]{*}{$T$} & \multicolumn{1}{l}{50}                                                            &         & 3.0	(19.7)	&	9.7	(36.3)	&	14.5	(57.3)	&	17.4	(70.2)	&	19.3	(78.4)	&	23.6	(88.4)	\\
& \multicolumn{1}{l}{70}                                                                                                     &         & 3.1	(19.7)	&	9.8	(36.6)	&	14.7	(58.6)	&	17.6	(71.2)	&	19.4	(79.2)	&	23.6	(88.9)	\\
& \multicolumn{1}{l}{90}                                                                                                       &         & 3.1	(19.7)	&	10.0	(37.2)	&	14.8	(59.4)	&	17.6	(71.8)	&	19.5	(79.7)	&	23.6	(89.2)	\\
\addlinespace
\multirow{3}[1]{*}{$c_p$ } & \multicolumn{1}{l}{0.5}                                               &         & 4.4	(19.7)	&	12.7	(37.2)	&	18.3	(59.4)	&	21.6	(71.8)	&	23.7	(79.7)	&	28.0	(89.2)	\\
& \multicolumn{1}{l}{1}                                                                                                      &         & 2.9	(12.7)	&	9.5	(33.5)	&	14.2	(53.9)	&	17.1	(65.1)	&	18.9	(72.5)	&	22.9	(82.8)	\\
& \multicolumn{1}{l}{1.5}                                                                                                      &         & 1.9	(8.7)	&	7.4	(29.9)	&	11.5	(48.9)	&	14.0	(59.4)	&	15.6	(65.8)	&	19.9	(77.6)	\\
\addlinespace
\multirow{3}[1]{*}{$\E[\Lambda]$ } & \multicolumn{1}{l}{0.5}                                               &         & 3.0	(19.7)	&	8.3	(33.5)	&	12.1	(47.5)	&	14.6	(60.1)	&	16.2	(69.1)	&	19.6	(83.6) \\
& \multicolumn{1}{l}{0.75}                                                                                                      &         & 3.0	(18.7)	&	10.0	(36.8)	&	14.9	(54.6)	&	17.8	(67.6)	&	19.7	(76.3)	&	23.8	(87.5)	\\
& \multicolumn{1}{l}{1}                                                                                                     &         & 3.1	(14.3)	&	11.2	(37.2)	&	16.9	(59.4)	&	20.3	(71.8)	&	22.4	(79.7)	&	27.3	(89.2) \\
\addlinespace
\multirow{6}[1]{*}{$cv_{\Lambda}$ } & \multicolumn{1}{l}{0.1}      &                                         &    0.0	(0.1)	&	0.0	(0.2)	&	0.1	(0.2)	&	0.1	(0.3)	&	0.1	(0.3)	&	0.2	(0.4)	\\
& \multicolumn{1}{l}{0.25}                                                                                                     &         & 0.2	(0.5)	&	0.4	(0.8)	&	0.5	(1.0)	&	0.5	(1.1)	&	0.6	(1.2)	&	0.8	(1.4)	\\
& \multicolumn{1}{l}{0.5}                                                                                                     &         & 0.6	(1.3)	&	1.3	(2.7)	&	1.8	(3.5)	&	2.1	(4.1)	&	2.4	(5.3)	&	3.1	(6.4)	\\
& \multicolumn{1}{l}{1}                                                                                                     &         & 4.1	(12.0)	&	7.8	(22.5)	&	9.4	(26.3)	&	10.4	(28.3)	&	11.0	(31.1)	&	12.4	(35.7)	\\
& \multicolumn{1}{l}{2}                       
&         & 10.3	(19.7)	&	22.7	(36.8)	&	29.5	(44.8)	&	33.8	(51.5)	&	36.7	(57.3)	&	46.0	(73.9)	\\
& \multicolumn{1}{l}{4}                                                                                                      &         & 13.0	(24.6)	&	27.2	(37.2)	&	35.3	(59.4)	&	40.8	(71.8)	&	44.9	(79.7)	&	56.9	(89.2)	\\

\midrule
Total &                                                                                                                      &         & 3.6	(24.6)	&	9.8	(37.2)	&	14.0	(59.4)	&	16.5	(71.8)	&	18.2	(79.7)	&	22.3	(89.2)	\\
    \bottomrule
    \end{tabular}%
\end{table}%

Based on the results in Table \ref{tab:resultsTestbedChap5}, we can state the following main observations: 
\begin{enumerate}
\item Pooling of data for learning a common, unknown parameter can lead to significant savings compared to not pooling data and learning it independently.
\item The magnitude of the savings seems to be inextricably linked with the magnitude of uncertainty in the parameter $\lambda$ measured by the coefficient of variation of $\Lambda$. When  $cv_{\Lambda}$ is high,  savings of up to 56.9\% on average (over all instances with $cv_{\Lambda}=4$ and $N=20$)  can be achieved, while if  $cv_{\Lambda}$ is low, savings become almost negligible ($\leq 0.2\%$ on average). This can be explained as follows. When there is high uncertainty in the unknown parameter, pooling data allows the maintenance service provider to faster learn the unknown parameter compared to learning it from data generated by a single system. This result implies that pooling data is especially beneficial for real-life settings where there is high uncertainty in $\lambda$ through limited knowledge, limited historical data, and/or poor estimation procedures.
The opposite is also true. When there is little uncertainty in the unknown parameter, the benefit of data pooling vanishes; a maintenance service provider already has an accurate belief of the unknown parameter that needs little updating. 
\item When comparing the average savings for increasing values of $N$, we find that pooling has already a significant impact for small values of $N$, and that the marginal savings gradually decrease when $N$ increases.
\item The savings for each value of $N$ tend to decrease as the ratio $c_u/c_p$ decreases (recall that we keep $c_u$ fixed and vary $c_p$). When this ratio decreases and $N$ is fixed, maintenance decisions have less impact on the resulting costs -- simply because their cost difference decreases. Consequently, the benefits of utilizing pooled learning in such maintenance decisions also decrease when $c_u/c_p$ decreases. 
\item The savings for each value of $N$ tend to increase  as $\E[\Lambda]$ increases. When $\E[\Lambda]$ increases and $N$ is fixed, the expected deterioration increment between two consecutive decision epochs is larger and, as a result, the optimal control limit will be more conservative. The results suggest that in that regime, the choice of the control limit has a higher impact on the resulting costs than when $\E[\Lambda]$ is low and a less conservative control limit is chosen. By pooled learning, one is able to better choose this control limit, and as a result, the relative savings of pooled learning also increase when $\E[\Lambda]$ increases. 
\end{enumerate}

Observations 1-3 are also illustrated by Figure \ref{poolingEffects}. In this figure we plot the relative savings ($\%\Delta$) as a function of $N$ for various values of $cv_{\Lambda}$ when $\E[\Lambda]= 0.75$, $\xi=10$, $c_p=0.5$, $c_u=10$, and $T=90$.

\begin{figure}[htb!]
\begin{center}
\includegraphics[width=0.8\textwidth]{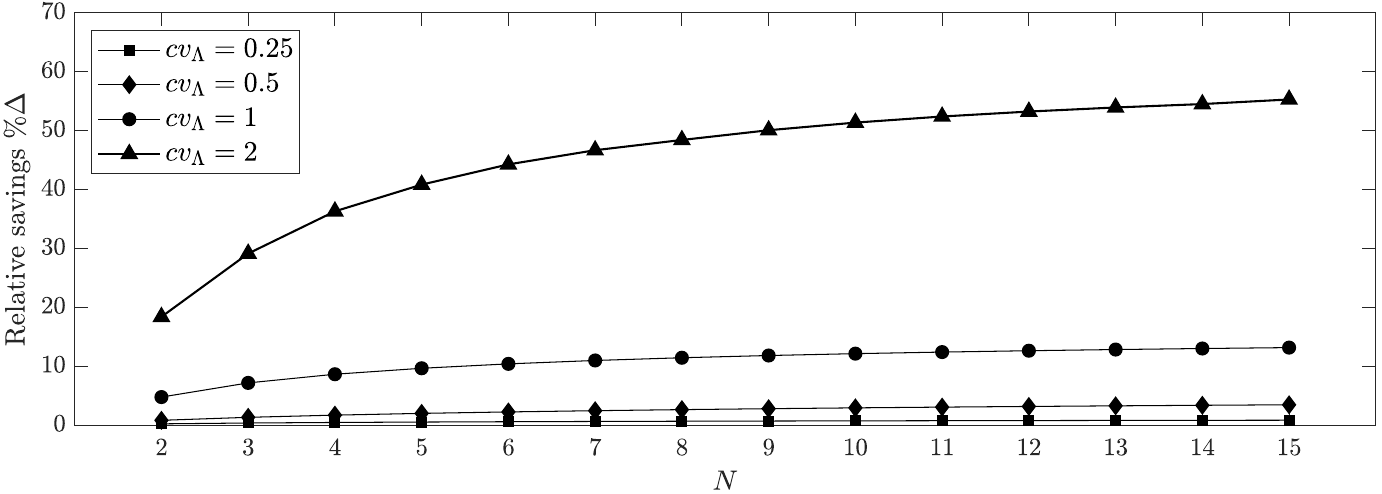}
\caption{{Relative savings ($\%\Delta$) as function of $N$ for various values of $cv_{\Lambda}$ for $\E[\Lambda]=0.75$, $\xi=10$, $c_p=0.5$, $c_u=10$, and $T=90$.}}
\label{poolingEffects}
\end{center}
\end{figure}

The plot indeed shows that for a given level of parameter uncertainty, pooling data across a larger number of systems increases the relative savings. The rate at which the savings increase in $N$ increases significantly in the coefficient of variation. This confirms that pooling data can lead to significant cost reductions, especially when the uncertainty surrounding an unknown parameter is high. 
We further clearly see that the marginal savings due to adding an extra system to the pooled systems decreases as $N$ increases.

\section{An application to spare parts inventory systems}\label{Pooling:spare}
In this section we illustrate that our decomposition result can also be applied to spare parts inventory systems. To make this illustration explicit, we shall redefine some notation introduced in the previous sections.
We consider a maintenance service provider that operates a set of $N\geq 1$ local spare parts warehouses, and we denote this set by $\mathcal{N}=\{1,\ldots,N\}$. Each local warehouse stocks spare parts of the same critical component to serve an installed base of (closely located) technical systems. 
As is common in the spare parts inventory literature \cite[e.g.][]{drent2021expediting}, we assume that that demand for spare parts at each local warehouse follows an independent Poisson process. The rate of these Poisson processes $\lambda$ is identical across all local warehouses. This is a reasonable assumption when the installed bases served by each local warehouse are of similar size. 

The maintenance service provider is concerned with stocking decisions over a finite horizon of $T$ periods. 
At the start of each such period, the maintenance service provider decides how many new spare parts are transported to each local warehouse $i\in\mathcal{N}$. Each unit has a transportation cost $c_v^i$. Since the lead times to the local warehouses are typically much shorter than the duration of a period, we assume that these new spare parts are instantly delivered, after which the period commences. When a component in a technical system fails during the period, local warehouse $i\in\mathcal{N}$ responsible for this system immediately replaces the failed component with a read-for-new one, if it has one available. Otherwise, the part is backordered at unit cost $c_b^i$, which reflects expensive downtimes or emergency shipments from a central depot or an external supplier. Spare parts on stock that are carried over to the next period cost $c_h^i$ per unit. We account for both backorder and holding costs at the end of each period.
We assume that each period lasts 1 time unit so that demand in each period is Poisson distributed with mean $\lambda$. We employ the Bayesian approach of Section \ref{cbmPoolFormulation} to infer the a-priori unknown rate $\lambda$ based on the observed demands at all local warehouses over the entire planning horizon.
Observe that in the updating scheme of this approach (cf. Equation \ref{updatingChap5}), $k$ is now defined as the total cumulative demand at all $N$ local warehouses up to period $t$. Given $k$ and $t$, the posterior predictive $Z_i$ now represents the total demands that arrive at local warehouse $i\in\mathcal{N}$ during the next decision epoch. 

The state space of the Bayesian MDP corresponding to the decision problem described above is given by $\mathcal{S}\triangleq \Z^{N}\times \N_0$.
For a given state $(\bm{x},k)\in\mathcal{S}$, $\bm{x}=(x_1,x_2,\ldots, x_N)$ represents the vector of net inventory levels of all local warehouses before order placement at the start of a period, and $k$ denotes the sum of all observed demands until that period. For a given state $(\bm{x},k)\in\mathcal{S}$, the action space $\mathcal{A}(\bm{x})$ contains all possible net inventory levels after orders are placed and received but before demand is realized, i.e. for any action $\bm{a} = (a_1, a_2, \ldots, a_N) \in \mathcal{A}(\bm{x})$, $a_i \in \{x_i, x_i+1,\ldots \}$ is the net inventory level per local warehouse. As before, we let $\bm{Z} = (Z_1,Z_2,\ldots, Z_N)$ denote an $N$-dimensional random vector with $Z_i \sim  NB\Big(\alpha_0 + k, \frac{\beta_0+N\cdot t}{\beta_0 + N\cdot t + 1}\Big)$. As is customary in inventory theory, the direct cost in a given period accounts for the expected holding and backorder costs of the orders placed in that period. As such, the total transportation, holding, and backorder costs over all local spare parts warehouses, denoted $C(\bm{a},\bm{x},k)$, is given by
\begin{align*}
   C(\bm{a},\bm{x},k) \triangleq  \sum_{i\in\mathcal{N}} \left( c_v^i(a_i-x_i)+c^i_h\E\left[(a_i - Z_i)^+\right] + c^i_b\E\left[(Z_i-a_i)^+\right] \right),
\end{align*}
with $x^+ \triangleq \text{max}(x,0)$. While the direct cost now depends on the pooled data through state variable $k$, we note that it remains decomposable in $N$ direct costs $C_i(a,x,k) \triangleq c_v^i(a_i-x_i)+c^i_h\E[(a_i - Z_i)^+] + c^i_b\E[(Z_i-a_i)^+]$, each associated with a local warehouse $i\in\mathcal{N}$. Let $V^N_t(\bm{x},k)$ denote the optimal expected total cost over decision epochs $t,t+1,\ldots,T$, starting from state $(x,k) \in \mathcal{S}$. By the principle of optimality, $V^N_t(\bm{x},k)$ satisfies the following recursive Bellman optimality equations
\begin{align*}
V^N_t(\bm{x},k) = \min_{\bm{a} \in \mathcal{A}(\bm{x})} \Bigg\{C(\bm{a},\bm{x},k) + \E_{\bm{Z}}\bigg[  V^N_{t+1}\Big(\bm{a}-\bm{Z},k+\sum_{i\in \mathcal{N}}Z_i \Big) \bigg]  \Bigg\},
\end{align*}
with $V^N_{T}(\cdot,\cdot)\triangleq 0$.

We now formulate the corresponding alternative MDP in which the original MDP can be decomposed. For each $i\in \mathcal{N}$, we let $\tilde{V}^{N,i}_t(x,k)$ denote the optimal expected total cost over decision epochs $t, t+1, \ldots, T$, starting from state $(x,k)\in\Z \times \N_{0}$. Then, $\tilde{V}^{N,i}_t(x,k)$ satisfies the following recursive Bellman optimality equations
\begin{align}\label{decomposedMPDspare}
\tilde{V}^{N,i}_t(x,k) = \min_{a \in \mathcal{A}(x)}\Bigg\{C_i(a,x,k) + \E_{(Z,K)}\bigg[\tilde{V}^{N,i}_{t+1}\Big(a-Z,k+Z+K\Big) \bigg]  \Bigg\}, 
\end{align}
where  $Z \sim NB\Big(\alpha_0 + k, \frac{\beta_0+N\cdot t}{\beta_0 + N\cdot t + 1}\Big)$, $K \sim NB\Big( (N-1)\cdot(\alpha_0 + k), \frac{\beta_0+N\cdot t}{\beta_0 + N\cdot t + 1}\Big)$, and $\tilde{V}^{N,i}_{T}(\cdot,\cdot) \triangleq 0.$ 
Observe that the alternative formulation in \eqref{decomposedMPDspare} resembles a single spare parts warehouse problem in isolation, but where the dynamics of the system depend on the learned information of all warehouses through $k$.

We are now in the position to present our decomposition result applied to the spare parts inventory system setting. Its proof is almost verbatim the proof of Theorem \ref{decomposition} and therefore omitted.
\begin{theorem}\label{decompositionSpare}
For each $t \in \{0,1,\ldots, T\}$, we have: 
$$V^N_t(\bm{x},k) = \sum_{i\in\mathcal{N}} \tilde{V}^{N,i}_t(x_i,k), \quad \mbox{for all }(\bm{x},k)\in\mathcal{S}.$$
\end{theorem}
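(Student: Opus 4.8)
The plan is to reproduce, almost line for line, the backward-induction argument behind Theorem~\ref{decomposition}, exploiting that the spare parts MDP has exactly the structural ingredients that made that proof work: the one-period cost is additively separable across warehouses, the action set is a product set, and the random demands $Z_i$ driving the transitions do not depend on the chosen actions. Concretely, I would prove $V^N_t(\bm{x},k) = \sum_{i\in\mathcal{N}} \tilde{V}^{N,i}_t(x_i,k)$ for all $(\bm{x},k)\in\mathcal{S}$ by downward induction on $t$.

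The base case $t = T$ is immediate: $V^N_T(\cdot,\cdot) \triangleq 0 = \sum_{i\in\mathcal{N}} 0 = \sum_{i\in\mathcal{N}} \tilde{V}^{N,i}_T(\cdot,\cdot)$. For the inductive step, assume the identity holds at $t+1$ and start from the Bellman equation for $V^N_t$. I would then carry out the four moves $(a)$--$(d)$ of the proof of Theorem~\ref{decomposition}: $(a)$ decompose the direct cost, writing $C(\bm{a},\bm{x},k) = \sum_{i\in\mathcal{N}} C_i(a_i,x_i,k)$ --- the extra dependence on $k$ is harmless since $k$ is shared by every summand; $(b)$ invoke the induction hypothesis to replace $V^N_{t+1}(\bm{a}-\bm{Z}, k+\sum_j Z_j)$ by $\sum_{i\in\mathcal{N}} \tilde{V}^{N,i}_{t+1}(a_i - Z_i, k+\sum_j Z_j)$; $(c)$ push $\E_{\bm{Z}}$ through the finite sum and, in the $i$-th term, split $\sum_{j\in\mathcal{N}} Z_j = Z_i + \sum_{j\in\mathcal{N}\setminus\{i\}} Z_j$; $(d)$ note that $\sum_{j\in\mathcal{N}\setminus\{i\}} Z_j$ is a convolution of $N-1$ i.i.d.\ $NB\big(\alpha_0+k, \tfrac{\beta_0+Nt}{\beta_0+Nt+1}\big)$ variables and is therefore distributed as $K \sim NB\big((N-1)(\alpha_0+k), \tfrac{\beta_0+Nt}{\beta_0+Nt+1}\big)$, so that the $i$-th summand depends on $\bm{a}$ only through $a_i$. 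Since $\mathcal{A}(\bm{x}) = \prod_{i\in\mathcal{N}} \mathcal{A}(x_i)$ with $\mathcal{A}(x_i) = \{x_i, x_i+1,\dots\}$, minimizing this separable objective over $\bm{a}$ decouples into $\sum_{i\in\mathcal{N}} \min_{a\in\mathcal{A}(x_i)}\big\{C_i(a,x_i,k) + \E_{(Z,K)}[\tilde{V}^{N,i}_{t+1}(a-Z, k+Z+K)]\big\}$, which is exactly $\sum_{i\in\mathcal{N}} \tilde{V}^{N,i}_t(x_i,k)$ by Equation~\eqref{decomposedMPDspare}. This closes the induction.

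I do not expect a real obstacle here; the content is essentially that of Theorem~\ref{decomposition}. The one point deserving a sentence of care, and the only genuine difference from the condition-based maintenance model, is that the action space is now countably infinite, so the interchange of the outer minimization with the sum in step $(d)$ requires that each per-warehouse minimization be attained. That holds because $C_i(a,x_i,k) \to \infty$ as $a\to\infty$ (the holding-cost term $c_h^i\,\E[(a-Z)^+]$ grows without bound) while the value functions are bounded below by $0$, so the per-warehouse objective is coercive in $a$ and its infimum is achieved. Everything else --- linearity of expectation, closure of the Negative Binomial family under convolution, and separability of both the cost and the action set --- transfers verbatim.
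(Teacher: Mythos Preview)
Your proposal is correct and follows exactly the approach indicated by the paper, which in fact omits the proof altogether with the remark that it is ``almost verbatim the proof of Theorem~\ref{decomposition}.'' Your added observation that the per-warehouse minimum is attained (by coercivity of the holding-cost term) is a welcome bit of extra care, since the action space is now countably infinite rather than finite.
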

As before, the above decomposition result motivates us to establish structural properties of the alternative 2-dimensional MDP, which then carry over to the original, high-dimensional MDP. To that end, we first establish convexity of the value function in the inventory level before order placement. 
\begin{proposition}\label{valueConvex}
For each $t \in \{0,1,\ldots, T\}$, $k\in\N_0$, and $i\in\mathcal{N}$, the value function $\tilde{V}^{N,i}_t(x,k)$ is convex in $x$.
\end{proposition}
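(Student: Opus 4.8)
The plan is to establish the result by backward induction on $t$, following the classical convexity argument for single-location inventory recursions. Since the net-inventory state lives in $\Z$, ``convex in $x$'' is read throughout in the discrete sense, i.e. $\tilde{V}^{N,i}_t(x+1,k)-\tilde{V}^{N,i}_t(x,k)$ is non-decreasing in $x$, and every step below is to be read with this discrete notion. The base case $t=T$ is immediate because $\tilde{V}^{N,i}_T(\cdot,\cdot)\equiv 0$. For the inductive step, fix $k\in\N_0$ and assume $\tilde{V}^{N,i}_{t+1}(\cdot,k')$ is convex for every $k'\in\N_0$. Rewriting \eqref{decomposedMPDspare} in terms of the post-order net inventory level $a$ and pulling the $a$-independent term $-c^i_v x$ out of the minimum gives
\begin{align*}
\tilde{V}^{N,i}_t(x,k) &= -c^i_v x + \min_{a\geq x} G_t(a,k), \text{ where}\\
G_t(a,k) &\triangleq c^i_v a + c^i_h\,\E\big[(a-Z)^+\big] + c^i_b\,\E\big[(Z-a)^+\big] + \E_{(Z,K)}\big[\tilde{V}^{N,i}_{t+1}(a-Z,\,k+Z+K)\big].
\end{align*}

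The first step is to show that $a\mapsto G_t(a,k)$ is convex. The term $c^i_v a$ is linear; for each realization of $Z$ the maps $a\mapsto(a-Z)^+$ and $a\mapsto(Z-a)^+$ are convex, so $\E[(a-Z)^+]$ and $\E[(Z-a)^+]$ are convex in $a$ (expectation preserves convexity); and for each realization $(z,\kappa)$ of $(Z,K)$ the map $a\mapsto\tilde{V}^{N,i}_{t+1}(a-z,\,k+z+\kappa)$ is convex in $a$ by the induction hypothesis (the second argument is independent of $a$, and precomposition with the affine shift $a\mapsto a-z$ preserves convexity), so its expectation over $(Z,K)$ is convex as well. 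Being a sum of convex functions, $G_t(\cdot,k)$ is convex; a routine argument (holding and ordering costs grow without bound while the value function is bounded below) shows its minimum is attained at a finite order-up-to level $S_t(k)$.

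The second step is the partial-minimization argument. Because $G_t(\cdot,k)$ is convex with minimizer $S_t(k)$, we have $\min_{a\geq x}G_t(a,k)=G_t(\max\{x,S_t(k)\},k)$, which equals the constant $G_t(S_t(k),k)$ for $x\leq S_t(k)$ and equals $G_t(x,k)$ --- convex and non-decreasing on $\{S_t(k),S_t(k)+1,\ldots\}$ --- for $x\geq S_t(k)$; the associated increment sequence in $x$ is non-decreasing, so $x\mapsto\min_{a\geq x}G_t(a,k)$ is convex. Adding the linear term $-c^i_v x$ preserves convexity, hence $\tilde{V}^{N,i}_t(\cdot,k)$ is convex, which closes the induction. (Equivalently, one may invoke the general fact that partial minimization of a jointly convex function --- here trivially jointly convex, being constant in $x$ --- over the convex set $\{(x,a):a\geq x\}$ yields a convex function.)

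The structural logic here is entirely standard; the main obstacle is the technical bookkeeping in the discrete, unbounded setting: verifying finiteness of all the expectations appearing in $G_t$ (which follows since the negative binomial laws have all moments and the value function grows at most polynomially in $x$), confirming that the order-up-to level $S_t(k)$ is finite, and checking that the discrete partial-minimization step genuinely preserves discrete convexity, as sketched above. None of these points is deep, but each must be handled with care.
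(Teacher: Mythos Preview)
Your proposal is correct and follows essentially the same route as the paper: backward induction on $t$, rewriting the recursion as $\tilde{V}^{N,i}_t(x,k)=-c^i_v x+\min_{a\geq x}G_t(a,k)$, establishing convexity of $G_t(\cdot,k)$ via the induction hypothesis and preservation of convexity under expectation, and then invoking partial minimization over the convex set $\{a\geq x\}$. The only cosmetic difference is that the paper verifies convexity of $G_t$ by computing the discrete second difference $\Delta^2 G^{N,i}_t(a,k)$ explicitly, whereas you argue more abstractly that each summand is convex; and the paper states the partial-minimization step in one line, while you spell out the order-up-to structure.
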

\begin{proof}
See \ref{proofConvex} 
\end{proof}

The above result also implies that the optimal policy of the decomposed MDP is characterized by an order-up-to structure, in which we place orders such that the inventory level after ordering reaches a certain target level (if needed). Our next result formalizes the optimality of order-up-to levels, together with their non-decreasing monotonic behavior in the state variable $k$. 
\begin{proposition}\label{optPolicy}
For each $t \in \{0,1,\ldots, T-1\}$, $k\in\N_0$, and $i\in\mathcal{N}$, there exists a single target level $\delta_i^{(k,t)} \in \Z$ such that the optimal action is $a_i=\delta_i^{(k,t)}$ if $x<\delta_i^{(k,t)}$ and $a_i=x$ otherwise. The optimal target level $\delta_i^{(k,t)}$ is non-decreasing in $k$.
\end{proposition}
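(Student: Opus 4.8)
The plan is to proceed in two stages: first establish the order-up-to structure at each fixed $(k,t)$ using the convexity from Proposition~\ref{valueConvex}, and then establish the monotonicity of the optimal target level in $k$ by a coupling/stochastic-dominance argument built on Lemma~\ref{increasingStochastic}. For the first stage, fix $t$ and $k$ and define the single-period-plus-future cost of ordering up to level $y$,
\begin{align*}
G^{N,i}_t(y,k) \triangleq c_h^i\,\E\big[(y-Z)^+\big] + c_b^i\,\E\big[(Z-y)^+\big] + \E_{(Z,K)}\big[\tilde V^{N,i}_{t+1}(y-Z,k+Z+K)\big],
\end{align*}
so that the Bellman equation \eqref{decomposedMPDspare} reads $\tilde V^{N,i}_t(x,k) = -c_v^i x + \min_{y\ge x}\{c_v^i y + G^{N,i}_t(y,k)\}$. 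Since $(y-Z)^+$ and $(Z-y)^+$ are convex in $y$ for each realization of $Z$, and $y\mapsto \tilde V^{N,i}_{t+1}(y-Z,k+Z+K)$ is convex by Proposition~\ref{valueConvex} (composition with the affine map $y\mapsto y-Z$), taking expectations preserves convexity; hence $y\mapsto c_v^i y + G^{N,i}_t(y,k)$ is convex. Letting $\delta_i^{(k,t)}$ be a minimizer of this convex function over $\Z$, the minimization over $y\ge x$ is solved by $y=\delta_i^{(k,t)}$ when $x<\delta_i^{(k,t)}$ and by $y=x$ otherwise, which is exactly the claimed order-up-to policy. (This also re-derives the convexity of $\tilde V^{N,i}_t(\cdot,k)$ and keeps the induction self-consistent.)

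For the monotonicity in $k$, the idea is to show that the \emph{marginal} cost of one more unit of stock, $\Delta_y G^{N,i}_t(y,k) \triangleq c_v^i + G^{N,i}_t(y+1,k) - G^{N,i}_t(y,k)$, is non-increasing in $k$ for every fixed $y$; since $\delta_i^{(k,t)}$ is the smallest $y$ at which this first differences turns non-negative, a pointwise decrease in $k$ of the first-difference function pushes that crossing point weakly to the right, giving $\delta_i^{(k,t)}\le\delta_i^{(k+1,t)}$. The two pieces of $G$ to control are the myopic newsvendor part and the future-value part. For the myopic part, the first difference in $y$ equals $c_h^i\,\P[Z\le y] - c_b^i\,\P[Z>y]$, and by Lemma~\ref{increasingStochastic}, $Z$ is stochastically larger under $k+1$, so $\P[Z\le y]$ decreases and $\P[Z>y]$ increases in $k$; hence this term is non-increasing in $k$, as needed. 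For the future-value part I would carry the statement ``$x\mapsto \tilde V^{N,i}_{t+1}(x,k)$ is convex and its first difference in $x$ is non-increasing in $k$'' as a strengthened induction hypothesis, i.e. jointly prove submodularity of $\tilde V^{N,i}_{t+1}$ in $(x,k)$ (in the discrete sense $\tilde V(x+1,k)-\tilde V(x,k)$ non-increasing in $k$) together with convexity; then the expectation $\E_{(Z,K)}[\tilde V^{N,i}_{t+1}(y-Z,k+Z+K)]$ inherits the property because increasing $k$ both shifts the argument $k+Z+K$ up (submodularity of $\tilde V_{t+1}$) and stochastically inflates $Z$ and $K$ (Lemma~\ref{increasingStochastic}, applied also to $K$, whose parameters depend on $k$ in the same monotone way), and each of these operations preserves ``first difference non-increasing in $k$'' when combined with convexity in the first argument.

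The main obstacle I expect is precisely this joint induction: I need the \emph{pair} of properties (convexity in $x$, and submodularity in $(x,k)$) to be simultaneously preserved by the Bellman operator, because the monotonicity of the target level cannot be obtained from convexity alone. The delicate point is showing that $\E_{(Z,K)}[\tilde V^{N,i}_{t+1}(y-Z,\,k+Z+K)]$ has first difference in $y$ non-increasing in $k$ when $(Z,K)$ themselves depend on $k$ through their Negative Binomial parameters; this requires combining the submodularity of $\tilde V_{t+1}$ with a stochastic-monotonicity argument for the random vector $(Z,K)$ in $k$, for which I would construct an explicit monotone coupling across consecutive values of $k$ (e.g. via the Poisson-mixture representation, writing $Z$ and $K$ conditional on $\Lambda$ as independent Poissons and noting the posterior of $\Lambda$ increases stochastically in $k$). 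Once the coupling is in place, each elementary operation — adding an affine function, composing with $y\mapsto y-Z$, mixing over a stochastically larger index, and adding the convex newsvendor term whose cross-difference we already signed — is routine to check, and the base case $\tilde V^{N,i}_T\equiv 0$ trivially satisfies both properties. I would relegate the coupling lemma and these elementary closure checks to the appendix (alongside the proofs of Propositions~\ref{valueConvex} and the companion statements), citing the standard preservation results for $K$-convexity-free newsvendor recursions (e.g. \citep[][Chapter~4]{puterman2014markov}) where convenient.
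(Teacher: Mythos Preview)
Your proposal is correct and follows essentially the same approach as the paper: the order-up-to structure comes from the convexity of $y\mapsto c_v^i y + G^{N,i}_t(y,k)$ (Proposition~\ref{valueConvex}), and the monotonicity of $\delta_i^{(k,t)}$ in $k$ is proved by exactly the joint induction you describe, carrying as auxiliary hypotheses that $\Delta G^{N,i}_t(a,k)$ and $\Delta \tilde V^{N,i}_t(x,k)$ are both non-increasing in $k$. The only minor simplification relative to your plan is that no explicit Poisson-mixture coupling is needed---Lemma~\ref{increasingStochastic} (applied to both $Z$ and $K$) together with the convexity and the induction hypothesis already delivers the required inequality via the standard stochastic-order preservation result \citep[Theorem~1.A.3~(b)]{shaked2007stochastic}.
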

\begin{proof}
See  \ref{optPolicyProof}.
\end{proof}

Proposition \ref{optPolicy} shows that the optimal target levels depend in real-time on the shared learning process across all local warehouses via the state variable $k$ in a monotonic way. This is intuitive: As we learn from the pooled data that $\lambda$ is higher (through a higher $k$) and everything else fixed, the next demand will likely take on higher values. Therefore, we should increase the target level to which we raise our spare part inventories. This monotonicity result stands in contrast to the limiting result of the optimal control limits in the condition-based maintenance setting, as described in Proposition \ref{controllimitAsymptote}. The key difference is that, unlike in the condition-based maintenance setting, there is a cost incentive that is proportional to demand realizations. This cost incentive remains proportional, even if the demand becomes very large because of a very large $k$, so that ordering quantities are monotonically non-decreasing in $k$. We conclude by noting that similar monotonicity results for Bayesian inventory systems exist in the literature, but only for single inventory systems in isolation and without any data pooling considerations \citep[e.g.][]{iglehart1964dynamic}. 

\section{Conclusion}\label{Pooling:conclusion}
We have studied the benefits of data pooling when an unknown deterioration rate that multiple systems have in common needs to be learned over a finite lifespan. We formulated this problem as a finite horizon Bayesian MDP in which learning is pooled. This formulation suffers from the well-known curse of dimensionality. As a remedy, we proved a new decomposition result that establishes the equivalence between the original, highly dimensional MDP and multiple low-dimensional MDPs, enabling both structural analyses and numerical computations.  

Results of a comprehensive numerical study indicated that significant savings can be attained by pooling data to learn the a-priori unknown parameter. We also illustrated how our decomposition result can be applied to other settings, notably a set of canonical spare parts inventory systems, where an unknown but common demand rate needs to be learned across this set of inventory systems.   

For further research, it would be interesting to investigate the applicability of our decomposition result in other areas. One methodological area that seems particularly promising is a set of queues subjected to dynamic control; each with Poisson arrivals with a common but unknown rate and each queue with their own cost structure and action space. As many operations management problems (production-inventory systems, service systems, ride-sharing systems, to name a few) are modeled using (a set of) queues, we expect that, when extending such problems to settings where there is an unknown but common parameter (e.g., an arrival rate) that needs to be learned by pooling data (e.g., arrival data) while dynamically controlling the individual systems, our decomposition result might be useful for tractable analyses.    

\section*{Acknowledgements} 
The authors are grateful to the editorial team whose comments greatly improved the paper.

\bibliographystyle{abbrvnat}
\bibliography{sample}

\appendix
\section{Proofs}
\subsection{Proof of Lemma \ref{increasingStochastic}}\label{proofLemma1}
\begin{proof}
Suppose $k^+\geq k^->0$, keep $N$ and $t$ fixed, and let $Z(k^+)$ and $Z(k^-)$ be the posterior predictive random variable associated with $k^+$ and $k^-$, respectively. We need to show that for any increasing function $\phi(\cdot)$,  $\mathbb{E}\left[ \phi\left( Z(k^+)\right)\right] \geq \mathbb{E}\left[ \phi\left( Z(k^-)\right)\right]$ \citep[cf.][Definition 1.A.1]{shaked2007stochastic}. 
To this end, let $\bar{k}\triangleq k^+ - k^-\geq 0$. Since   $Z(k^+)$ and $Z(k^-)$ are Negative Binomial random variables with the same success probability (because $N$ and $t$ are kept fixed), we can construct another Negative Binomial random variable, denoted with  $Z(\bar{k})$, independent of $Z(k^-)$,  such that $Z(k^+) \sim Z(k^-) + Z(\bar{k})$. We have $\mathbb{E}\left[ \phi\left( Z(k^+) \right)\right] = \mathbb{E}\left[ \phi\left( Z(k^-) +Z(\bar{k}) \right)\right] \geq \mathbb{E}\left[ \phi\left( Z(k^-) \right)\right]$, where the inequality is because $\phi(\cdot)$ is increasing and $Z(\bar{k})$ is a non-negative random variable. 
\end{proof}

\subsection{Proof of Proposition \ref{valuemonotoneX}}\label{proofOfPropValueCBM}
\begin{proof}
We first prove part $(i)$ using induction on $t$. Let $i\in \mathcal{N}$, and note that the terminal costs are non-decreasing in $x$ since $\tilde{V}^{N,i}_T(x,k) = 0$ when $x<\xi_i$ and $c_u^i>0$ when $x\geq\xi_i$, so that the statement holds for the base case $T$.  Assume that the statement holds for some $0<t+1\leq T$, we will show that the statement then also holds for $t$. 
Consider $\tilde{V}^{N,i}_t(x,k)$. Since all terms, except the action of leaving the component in operation, on the right-hand side of \eqref{eq:decomposedAlt} are constant with respect to $x$, we only need to consider $\E_{(Z,K)}\bigg[\tilde{V}^{N,i}_{t+1}\Big(x+Z,k+Z+K\Big) \bigg]$. 
Let $\geq_{\text{st}}$ denote the usual stochastic order. Since the random variable $Z$ does not depend on $x$, we have $x^+ + Z \geq_{\text{st}} x^- + Z$ for $x^+\geq x^-\geq0$. Hence, together with the induction hypothesis, we conclude that the  expectation $\E_{(Z,K)}\bigg[\tilde{V}^{N,i}_{t+1}\Big(x+Z,k+Z+K\Big) \bigg]$ is non-decreasing in $x$ \citep[cf.][Theorem 1.A.3. (a)]{shaked2007stochastic}. Since the other terms are constant with respect to $x$, we conclude that  $\tilde{V}^{N,i}_t(x,k)$ is non-decreasing in $x$.

We now proceed with the proof of part $(ii)$, again using induction on $t$. Let $i\in \mathcal{N}$, and note that the terminal costs are non-decreasing in $k$ since $\tilde{V}^{N,i}_T(x,k)$ is constant with respect to $k$ for all $x$, so that the statement holds for the base case $T$. Assume that the statement holds for some $0<t+1\leq T$, we will show that the statement then also holds for $t$, i.e. for $\tilde{V}^{N,i}_t(x,k)$. Since $\tilde{V}^{N,i}_{t+1}(x,k)$ is non-decreasing in $k$ by the induction hypothesis, and non-decreasing in $x$ by Assertion $(i)$, and because the random variables $Z$ and $K$ are stochastically increasing in $k$ by Lemma \ref{increasingStochastic}, we can conclude that the expectations $ \E_{(Z,K)}\bigg[\tilde{V}^{N,i}_{t+1}\Big(Z,k+Z+K\Big) \bigg]$ and  $\E_{(Z,K)}\bigg[\tilde{V}^{N,i}_{t+1}\Big(x+Z,k+Z+K\Big) \bigg]$ are non-decreasing in $k$ as well \citep[cf.][Theorem 1.A.3. (b)]{shaked2007stochastic}. Since all terms on the right of \eqref{eq:decomposedAlt}  are non-decreasing in $k$, we conclude that $\tilde{V}^{N,i}_t(x,k)$ is non-decreasing in $k$.
\end{proof}

\subsection{Proof of Proposition \ref{controllimit}}\label{proofCL}
\begin{proof}
Preventive maintenance at decision epoch $t\in\{1,2,\ldots,T-1\}$ is optimal for $i\in \mathcal{N}$ when the following equation holds:
\begin{align}\label{inequalityOpt}
c^i_p +\E_{(Z,K)}\bigg[\tilde{V}^{N,i}_{t+1}\Big(Z,k+Z+K\Big) \bigg] \leq  \E_{(Z,K)}\bigg[\tilde{V}^{N,i}_{t+1}\Big(x+Z,k+Z+K\Big) \bigg] 
\end{align}
The left-hand side of Inequality \eqref{inequalityOpt} is constant with respect to $x$. Since $\E_{(Z,K)}\bigg[\tilde{V}^{N,i}_{t+1}\Big(x+Z,k+Z+K\Big) \bigg] $ is non-decreasing in $x$ (cf. Proposition \ref{valuemonotoneX}), we find that the right-hand side of Inequality \eqref{inequalityOpt} is non-decreasing in $x$. Hence, if the optimal decision is to carry out preventive maintenance in state $(\delta_i^{(k,t)},k)$ at decision epoch $t\in\{1,2,\ldots,T-1\}$, then the same decision is optimal for any state $(x,k)$ at decision epoch $t$ with $x \geq  \delta_i^{(k,t)}$, which implies the control limit policy.
\end{proof}

\subsection{Proof of Proposition \ref{controllimitAsymptote}}\label{proofk}
\begin{proof}
We need to show that preventive maintenance is never optimal when $k\rightarrow \infty$, for all $t\in\{1,2,\ldots,T-1\}$  and $i\in \mathcal{N}$. Observe that preventive maintenance at decision epoch $t\in\{1,2,\ldots,T-1\}$ is not optimal for $i\in \mathcal{N}$ when the following equation holds for all $x<\xi_i$:
\begin{align*}
c^i_p > \E_{(Z,K)}\bigg[\tilde{V}^{N,i}_{t+1}\Big(x+Z,k+Z+K\Big) \bigg] - \E_{(Z,K)}\bigg[\tilde{V}^{N,i}_{t+1}\Big(Z,k+Z+K\Big) \bigg] = \E_{(Z,K)}\bigg[\tilde{V}^{N,i}_{t+1}\Big(x+Z,k+Z+K\Big) -\tilde{V}^{N,i}_{t+1}\Big(Z,k+Z+K\Big) \bigg],
\end{align*}
where the equality is because both expectations are taken with respect to $(Z,K)$. To make the dependence of the random variables $Z$ and $K$ on $k$ explicit, we write $Z_k$ and $K_k$, respectively. This notation is different from other proofs in this paper -- where we use the notation $Z(k)$ and $K(k)$ -- because here we explicitly study the sequences $\{Z_k\}_{k\in\N_0}$ and $\{K_k\}_{k\in \N}$ when $k\rightarrow \infty$, in which case the use of subscript is conventional. We will show that for all $x<\xi_i$ 
\begin{align}
\lim\limits_{k\rightarrow \infty} \left( \E\bigg[\tilde{V}^{N,i}_{t+1}\Big(x+Z_k,k+Z_k+K_k\Big) - \tilde{V}^{N,i}_{t+1}\Big(Z_k,k+Z_k+K_k\Big) \bigg]\right) = 0, \label{toproof}
\end{align}
which implies the result because $c^i_p > 0$.  

Recall that $Z_k$ and $K_k$ are sums of independent (and identical) Negative Binomial random variables with the same success probability (because $N$ and $t$ are kept fixed). Let $Y\sim  NB\Big(\alpha_0, \frac{\beta_0+N\cdot t}{\beta_0 + N\cdot t + 1}\Big)$, we can then write $Z_k$ and $K_k$ as:
\begin{align*}
    Z_k \sim Y + \sum_{i=1}^k X_i, \mbox{ and, } K_k \sim (N-1) \cdot Y + (N-1)\cdot \sum_{i=1}^k X_i,  
\end{align*}
where $X_i \sim  NB\Big(1, \frac{\beta_0+N\cdot t}{\beta_0 + N\cdot t + 1}\Big)$ ($X_i$ is a Geometric random variable with success probability $p\coloneqq \frac{\beta_0+N\cdot t}{\beta_0 + N\cdot t + 1}$).  

Since $\E [ X_i^2] < \infty$ and $\E [ X_i]=\frac{1-p}{p} > 0$, it is well-known that $\lim\limits_{k\rightarrow \infty}  \sum_{i=1}^k X_i \overset{a.s.}{\to} \infty$, where $\overset{a.s.}{\to}$ denotes almost sure convergence. This follows from the Strong Law of Large Numbers: $\lim\limits_{k\rightarrow \infty} \frac{1}{k}\sum_{i=1}^k X_i \overset{a.s.}{\to} \E [ X_i]$ implies $\lim\limits_{k\rightarrow \infty}  \sum_{i=1}^k X_i \overset{a.s.}{\to}  \infty$ because $\E [ X_i]>0$. From this it follows that both  $Z_k$ and $K_k$ $\overset{a.s.}{\to} \infty$ as $k\rightarrow \infty$, and hence that:
\begin{align}
x + Z_k  &\overset{a.s.}{\to} \infty\mbox{, for all } x<\xi_i, \mbox{and,}\label{asx}  \\ 
k + Z_k + K_k &\overset{a.s.}{\to} \infty, \label{as} 
\end{align}
as $k\rightarrow \infty$. Because $\tilde{V}^{N,i}_{t+1}:\N_0^2 \rightarrow [0,\infty)$ is a (bounded) continuous function on its domain $\N_0^2$, we can invoke the Continuous Mapping Theorem \cite[see, e.g.,][Theorem 25.7]{billingsley} together with the almost sure convergence in \eqref{asx} and \eqref{as} to conclude that:
\begin{align*}
\tilde{V}^{N,i}_{t+1}\Big(x+Z_k,k+Z_k+K_k\Big)  &\overset{a.s.}{\to} \tilde{V}^{N,i}_{t+1}\Big(\infty,\infty \Big)\mbox{, for all } x<\xi_i, \mbox{and,}\\ 
\tilde{V}^{N,i}_{t+1}\Big(Z_k,k+Z_k+K_k\Big)  &\overset{a.s.}{\to} \tilde{V}^{N,i}_{t+1}\Big(\infty,\infty \Big),
\end{align*}
as $k\rightarrow \infty$, and hence that (because $\tilde{V}^{N,i}_{t+1}:\N_0^2 \rightarrow [0,\infty)$ is bounded):
\begin{align}
\tilde{V}^{N,i}_{t+1}\Big(x+Z_k,k+Z_k+K_k\Big) - \tilde{V}^{N,i}_{t+1}\Big(Z_k,k+Z_k+K_k\Big)   &\overset{a.s.}{\to} 0 \mbox{, for all } x<\xi_i, \mbox{as }k \rightarrow \infty. \label{convergence0}
\end{align}

As $\tilde{V}^{N,i}_{t+1}\Big(\cdot,\cdot \Big)$ is uniformly bounded by $(T-t)\cdot c^i_u$ from above and by $c^i_p$ from below, we know that  $\big| \tilde{V}^{N,i}_{t+1}\Big(x+Z_k,k+Z_k+K_k\Big) - \tilde{V}^{N,i}_{t+1}\Big(Z_k,k+Z_k+K_k\Big) \big| \leq (T-t)\cdot c^i_u $ for all $k$. Because $\tilde{V}^{N,i}_{t+1}\Big(x+Z_k,k+Z_k+K_k\Big) - \tilde{V}^{N,i}_{t+1}\Big(Z_k,k+Z_k+K_k\Big)$ is thus dominated for all $k$, we can apply the Dominated Convergence Theorem \citep[see, e.g.,][Theorem 16.4]{billingsley}  to exchange the order of taking the limit and expectation in \eqref{toproof} and use the convergence in \eqref{convergence0} to conclude that, for all $x<\xi$, $t\in\{1,2,\ldots,T-1\}$, and $i\in \mathcal{N}$: 
\begin{align*}
\lim\limits_{k\rightarrow \infty} \left( \E\bigg[\tilde{V}^{N,i}_{t+1}\Big(x+Z_k,k+Z_k+K_k\Big) - \tilde{V}^{N,i}_{t+1}\Big(Z_k,k+Z_k+K_k\Big) \bigg]\right) = \qquad \qquad \qquad \qquad  \\ 
 \qquad \qquad \qquad \qquad   \E\bigg[ \lim\limits_{k\rightarrow \infty} \left( \tilde{V}^{N,i}_{t+1}\Big(x+Z_k,k+Z_k+K_k\Big) - \tilde{V}^{N,i}_{t+1}\Big(Z_k,k+Z_k+K_k\Big) \right) \bigg] = 0.
\end{align*}
\end{proof}

\subsection{Proof of Proposition \ref{valueConvex}\label{proofConvex}}
Before we present the proof, we introduce some additional notation. 
For each $i\in \mathcal{N}$ and $t\in\{0,1,\ldots,T-1\}$, we find it convenient to define
\begin{align*}
    G^{N,i}_t(a,k) \triangleq c_v^ia + c^i_h\E_{Z}[(a - Z)^+] + c^i_b\E_Z[(Z-a)^+]  + \E_{(Z,K)}\bigg[\tilde{V}^{N,i}_{t+1}\Big(a-Z,k+Z+K\Big) \bigg],
\end{align*}
so that we can rewrite the recursive Bellman optimality equations in (\ref{decomposedMPD}) as
\begin{align}\label{decomposedMDP_appendix}
\tilde{V}^{N,i}_t(x,k) = \min_{a \in \mathcal{A}(x)}\Bigg\{G_t^{N,i}(a,k) - c_v^ix\Bigg\}.
\end{align}

Next, we define for each $k\in\Z$ and $i\in\mathcal{N}$ the discrete derivatives $\Delta G^{N,i}_t(a, k) \triangleq G^{N,i}_t(a+1, k) - G^{N,i}_t(a, k)$ and $\Delta^2 G^{N,i}_t(a,k)\triangleq \Delta G^{N,i}_t(a+1, k) - \Delta G^{N,i}_t(a, k)$ for each $t\in\{0,1,\ldots,T-1\}$. Similarly, for each $k\in\Z$ and $i\in\mathcal{N}$, we define $\Delta\tilde{V}^{N,i}_{t}(x,k) \triangleq \tilde{V}^{N,i}_{t}(x+1,k) - \tilde{V}^{N,i}_{t}(x,k)$ and $\Delta^2\tilde{V}^{N,i}_{t}(x,k) \triangleq \Delta\tilde{V}^{N,i}_{t}(x+1,k) - \Delta\tilde{V}^{N,i}_{t}(x,k)$ for each $t\in\{0,1,\ldots,T\}$.
\begin{proof}
We prove the statement using induction on $t$. 
Let $i\in\mathcal{N}$. Because $\tilde{V}^{N,i}_{T}(\cdot,\cdot) = 0$, the statement trivially holds for the base case $T$. 
Assume that the statement holds for some $0<t+1\leq T$, we will show that the statement then also holds for $t$. Consider $\tilde{V}^{N,i}_t(x,k)$. We find
\begin{align}
    \Delta G^{N,i}_t(a,k) &= c^i_v(a + 1) - c^i_v a + c^i_h \Bigg(\sum_{i=0}^{a+1}(a+1-i)P(Z=i) - \sum_{i=0}^{a}(a-i)P(Z=i) \Bigg) \nonumber \\
                   & + c^i_b \Bigg(\sum_{i={a+1}}^\infty(i-a-1)P(Z=i) - \sum_{i=a}^{\infty}(i-a)P(Z=i) \Bigg)  + \E_{(Z,K)}\bigg[\Delta\tilde{V}^{N,i}_{t+1}\Big(a-Z,k+Z+K\Big) \bigg], 
                    \label{eq:convexSpare}  \\ 
                    &= c^i_v + c^i_h - ( c^i_h + c^i_b)\cdot P(Z>a) + \E_{(Z,K)}\bigg[\Delta\tilde{V}^{N,i}_{t+1}\Big(a-Z,k+Z+K\Big) \bigg], \nonumber 
\end{align} 
and 
\begin{align*}
    \Delta^2 G^{N,i}_t(a,k) & = \Delta G^{N,i}_t(a+1,k) - \Delta G^{N,i}_t(a,k),  \\
    & = ( c^i_h + c^i_b)\cdot P(Z=a+1) + \E_{(Z,K)}\bigg[\Delta^2\tilde{V}^{N,i}_{t+1}\Big(a-Z,k+Z+K\Big) \bigg].
\end{align*} 
By the induction hypothesis, $\tilde{V}^{N,i}_{t+1}(a-z,k+z+k)$ is convex in $a$ for any $z$ and $k$, and because convexity is preserved under expectation,  $\E_{(Z,K)}[\tilde{V}^{N,i}_{t+1}(a-Z,k+Z+K)]$ is convex in $a$ as well. Thus $\Delta^2 G^{N,i}_t(a,k)\geq 0$ and hence $G^{N,i}_t(a,k)$ is convex in $a\in\Z$. This also implies that the function $G^{N,i}_t(a,k) - c^i_v x$ is convex in $a$ and $x$ for any $k$. Since taking the minimum of a convex function over a convex set results in a convex function, we conclude that $\tilde{V}^{N,i}_{t}(x,k) = \min_{a\geq x} \{G^{N,i}_t(a,k) - c^i_v x \}$ is a convex function of $x$.
\end{proof}

\subsection{Proof of Proposition \ref{optPolicy}}\label{optPolicyProof}
\begin{proof}
We first prove the optimality of order-up-to policies. Let $t\in\{0,1,\ldots, T-1\}$  and $i\in\mathcal{N}$. Since $\tilde{V}^{N,i}_t(x,k)$ is convex in $x$, it follows from Equation (\ref{decomposedMDP_appendix}) that the smallest $a$ for which $\Delta G^{N,i}_t(a,k)\geq 0$ is the unconstrained minimizer of $\tilde{V}^{N,i}_t(x,k)$, denoted $\delta_i^{(k,t)}$. This minimizer is independent of $x$, except that $x$ serves as lower bound in $\tilde{V}^{N,i}_t(x,k)$. Hence, we have
\begin{align*} 
\tilde{V}^{N,i}_{t}(x,k) = \begin{cases}
  G_{t+1}^{N,i}(\delta_i^{(k,t)},k) - c_v^i x&  \text{if }x \leq   \delta_i^{(k,t)}, \\
   G_{t}^{N,i}(x,k) - c_v^i x& \text{if } x >  \delta_i^{(k,t)},
\end{cases}
\end{align*}
which shows that $G_{t+1}^{N,i}(\delta_i^{(k,t)},k)$ remains constant as $x\leq \delta_i^{(k,t)}$, and increases for $x >  \delta_i^{(k,t)}$. Hence, the optimal action in state $(x,k)\in\mathcal{S}$ at decision epoch $t$ is $a_i=\delta_i^{(k,t)}$ if $x<\delta_i^{(k,t)}$ and $a_i=x$ otherwise.

We next continue with the monotonic behavior of the optimal target levels in $k$.
Our proof shares similarities with that of \cite{iglehart1964dynamic}, but his proof is for continuous demand distributions and without any data pooling considerations. We prove the statement using induction on $t$, and do so together with two auxiliary results:

\begin{proposition}\label{aux}
\begin{enumerate}[label=(\roman*)] For each $t\in\{0,1,\ldots,T-1\}$ and $i\in\mathcal{N}$, 
\item $\Delta G^{N,i}_t(a, k)$ is non-increasing in $k$, and
\item $\Delta \tilde{V}^{N,i}_t(x, k)$ is non-increasing in $k$.
\end{enumerate}
\end{proposition}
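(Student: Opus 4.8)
The plan is to prove parts~(i) and~(ii) of Proposition~\ref{aux} \emph{jointly} by backward induction on $t$, running from $t=T$ down to $t=0$. The base case is $t=T$, needed only for~(ii) since $G^{N,i}_t$ is defined only for $t\le T-1$: here $\tilde V^{N,i}_T(\cdot,\cdot)\equiv 0$, so $\Delta\tilde V^{N,i}_T(x,k)=0$ is constant in $k$ and hence (weakly) non-increasing. In the inductive step I would assume~(ii) holds at $t+1$ and then, in this order, first establish~(i) at $t$ (using~(ii) at $t+1$) and then deduce~(ii) at $t$ from~(i) at $t$; this closes the induction. Once the proposition is available, the monotonicity of $\delta_i^{(k,t)}$ in $k$ is immediate: $\delta_i^{(k,t)}$ is the smallest $a$ with $\Delta G^{N,i}_t(a,k)\ge 0$, and since $\Delta G^{N,i}_t(a,k)$ is non-increasing in $k$, that threshold can only move up as $k$ increases.

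For~(i) at $t$, I would start from the closed form derived in the proof of Proposition~\ref{valueConvex},
\[
\Delta G^{N,i}_t(a,k)=c^i_v+c^i_h-(c^i_h+c^i_b)\,\P[Z>a]+\E_{(Z,K)}\!\left[\Delta\tilde V^{N,i}_{t+1}(a-Z,\,k+Z+K)\right],
\]
where $Z$ and $K$ are Negative Binomial with parameters depending on $k$. The term $-(c^i_h+c^i_b)\,\P[Z>a]$ is non-increasing in $k$ because $Z$ is stochastically increasing in $k$ (Lemma~\ref{increasingStochastic}), so $\P[Z>a]$ is non-decreasing in $k$; the constant $c^i_v+c^i_h$ is irrelevant. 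For the expectation term I would use a coupling argument. Fix $k^+\ge k^-$ and let $(Z^+,K^+)$, $(Z^-,K^-)$ be the associated pairs. Since $Z^+$ and $Z^-$ (resp.\ $K^+$ and $K^-$) are Negative Binomial with the \emph{same} success probability --- it depends only on $N$ and $t$ --- the convolution property of the Negative Binomial (cf.\ step $(d)$ in the proof of Theorem~\ref{decomposition}) lets me write $Z^+\eqd Z^-+\bar Z$ and $K^+\eqd K^-+\bar K$ with $\bar Z,\bar K\ge 0$ independent of $(Z^-,K^-)$ and of each other. Under this coupling, almost surely $a-Z^+\le a-Z^-$ and $k^++Z^++K^+\ge k^-+Z^-+K^-$. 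Chaining the two monotonicities of $\Delta\tilde V^{N,i}_{t+1}$ --- non-decreasing in its first argument (because $\tilde V^{N,i}_{t+1}(\cdot,k)$ is convex, Proposition~\ref{valueConvex}) and non-increasing in its second argument (induction hypothesis~(ii) at $t+1$) --- yields $\Delta\tilde V^{N,i}_{t+1}(a-Z^+,k^++Z^++K^+)\le\Delta\tilde V^{N,i}_{t+1}(a-Z^-,k^-+Z^-+K^-)$ almost surely; taking expectations shows the expectation term, and hence $\Delta G^{N,i}_t(a,k)$, is non-increasing in $k$.

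For~(ii) at $t$, I would use the order-up-to representation established in the first part of the proof of Proposition~\ref{optPolicy}, namely $\tilde V^{N,i}_t(x,k)=\min_{a\ge x}G^{N,i}_t(a,k)-c^i_v x$. Since $G^{N,i}_t(\cdot,k)$ is convex in $a$, a standard fact about convex integer sequences gives $\min_{a\ge x+1}G^{N,i}_t(a,k)-\min_{a\ge x}G^{N,i}_t(a,k)=\max\{0,\Delta G^{N,i}_t(x,k)\}$, so that $\Delta\tilde V^{N,i}_t(x,k)=\max\{0,\Delta G^{N,i}_t(x,k)\}-c^i_v$. By part~(i), $\Delta G^{N,i}_t(x,k)$ is non-increasing in $k$, hence so is its positive part, hence so is $\Delta\tilde V^{N,i}_t(x,k)$, completing the inductive step.

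The main obstacle is the expectation term in~(i): as $k$ grows, the first argument $a-Z$ of $\Delta\tilde V^{N,i}_{t+1}$ tends to \emph{decrease} while the second argument $k+Z+K$ tends to \emph{increase}, so one cannot simply invoke a single monotonicity. The coupling is what turns both effects into \emph{pointwise} almost-sure inequalities, and it is a pleasant coincidence that convexity in the first argument and monotonicity in the second then push $\Delta\tilde V^{N,i}_{t+1}$ in the same downward direction. I would also dispatch the degenerate cases $k^+=k^-$ (trivial) and $N=1$ (where $K\equiv\bar K\equiv 0$, so the argument is unaffected), and note that integrability of the expectations is routine since $Z$ and $K$ have finite means and $\tilde V^{N,i}_{t+1}$ grows at most linearly in its first argument.
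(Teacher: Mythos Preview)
Your proposal is correct and follows essentially the same approach as the paper's proof: a joint backward induction in which (ii) at $t{+}1$ yields (i) at $t$ via the formula for $\Delta G^{N,i}_t$ (handling the tail-probability term by Lemma~\ref{increasingStochastic} and the expectation term by combining convexity in the first argument with the induction hypothesis on the second), after which (ii) at $t$ follows from the order-up-to representation. The only differences are cosmetic: the paper starts the base case at $T{-}1$ rather than $T$, splits the expectation comparison into two inequalities and appeals to a stochastic-ordering theorem where you use an explicit Negative-Binomial coupling, and handles (ii) by a case distinction on $x$ relative to $\delta_i^{(k,t)}$ where you use the compact identity $\Delta\tilde V^{N,i}_t(x,k)=\max\{0,\Delta G^{N,i}_t(x,k)\}-c^i_v$.
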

Let $k^+,k^-\in\N_0$ with $k^+>k^-$, and $i\in\mathcal{N}$. Similar to the first part of our derivation in Equation (\ref{eq:convexSpare}), we can write, using $\tilde{V}^{N,i}_{T}(\cdot,\cdot) = 0$:
\begin{align*}
    \Delta G^{N,i}_{T-1}(a,k) & = c^i_v + c^i_h - ( c^i_h + c^i_b)\cdot P(Z>a).
\end{align*}
Recalling from Lemma \ref{increasingStochastic} that the random variable $Z$ is stochastically increasing in $k$, we find that $\Delta G^{N,i}_{T-1}(a,k^+) \leq \Delta G^{N,i}_{T-1}(a,k^-)$ with $k^+\geq k^-$. This proves the base case for part $(i)$ of Proposition \ref{aux}. Following a similar reasoning and noting that $\delta_i^{(k,T-1)}$ is the smallest $a$ for which $\Delta G^{N,i}_{T-1}(a,k)\geq 0$ due to convexity of $\tilde{V}^{N,i}_{T-1}(x,k)$, we also find that $\delta_i^{(k^+,T-1)}\geq \delta_i^{(k^-,T-1)}$ with $k^+\geq k^-$. This proves the base case for the monotonic behavior of the order-up-to levels in Proposition \ref{optPolicy}. Since the optimal policy has an order-up-to structure, we can write 
\begin{align} \label{deltaValueBase}
\Delta\tilde{V}^{N,i}_{T-1}(x,k) = \begin{cases}
  - c_v^i  &  \text{if }x < \delta_i^{(k,T-1)}, \\
  \Delta G^{N,i}_{T-1}(x,k) - c_v^i & \text{if } x \geq  \delta_i^{(k,T-1)},
\end{cases}
\end{align}
which yields the base case $\Delta\tilde{V}^{N,i}_{T-1}(x,k^+)  \leq \Delta \tilde{V}^{N,i}_{T-1}(x,k^-)$ for part $(ii)$ of Proposition \ref{aux} since $\delta^{(k^+,T-1)} \geq \delta^{(k^-,T-1)}$.

Assume that $(i)-(ii)$ of Proposition \ref{aux} as well as $\delta_i^{(k^+,t)}\geq \delta_i^{(k^-,t)}$ of Proposition \ref{optPolicy} hold for $0<t+1\leq T-1$, we will now show that they then also hold for $t$. We continue from Equation (\ref{eq:convexSpare}) and write $Z(k)$ and $K(k)$ to denote the dependence of $Z$ and $K$ on $k$ explicitly:
\begin{align*}
\Delta G^{N,i}_t(a,k^-) & = c^i_v + c^i_h - ( c^i_h + c^i_b)\cdot P(Z(k^-)>a)  + \E_{(Z(k^-),Z(k^-))}\bigg[\Delta\tilde{V}^{N,i}_{t+1}\Big(a-Z(k^-),k^-+Z(k^-)+K(k^-)\Big) \bigg], \\ 
   &  \geq c^i_v + c^i_h - ( c^i_h + c^i_b)\cdot P(Z(k^+)>a) + \E_{(Z(k^-),Z(k^-))}\bigg[\Delta\tilde{V}^{N,i}_{t+1}\Big(a-Z(k^-),k^++Z(k^-)+K(k^-)\Big) \bigg], \\
   & \geq c^i_v + c^i_h - ( c^i_h + c^i_b)\cdot P(Z(K^+)>a) + \E_{(Z(k^+),Z(k^+))}\bigg[\Delta\tilde{V}^{N,i}_{t+1}\Big(a-Z(k^+),k^++Z(k^+)+K(k^+)\Big) \bigg], \\
   & = \Delta G^{N,i}_t(a,k^+),
\end{align*}
The first inequality follows from the induction hypothesis and the fact that $Z$ is stochastically increasing in $k$ by Lemma \ref{increasingStochastic}.
Because of the convexity of $\tilde{V}^{N,i}_{t+1}(x,k)$ in $x$ and the induction hypothesis we conclude that $\Delta\tilde{V}^{N,i}_{t+1}(a-z,k)$ is a non-increasing function of $z$ and $k$ for any $a$. Combining this with the fact that $Z$ and $K$ are stochastically increasing in $k$ by Lemma \ref{increasingStochastic} gives the second inequality \citep[cf.][Theorem 1.A.3. (b)]{shaked2007stochastic}. Similar to our reasoning in the base case, this also implies that $\delta^{(k^+,t)} \geq \delta^{(k^-,t)}$, and together with Equation (\ref{deltaValueBase}) for $t$, yields $\Delta\tilde{V}^{N,i}_{t}(x,k^+)  \leq \Delta \tilde{V}^{N,i}_{t}(x,k^-)$.
\end{proof}
\end{document}